\renewcommand{\cite}[1]{\citep{#1}}
\newcommand{\BV}{\bm{V}}
\newcommand{\omitme}[1]{}
\newcommand{\Mc}{\mathcal{M}}
\newcommand{\PP}{\mathbb{P}}
\newcommand{\NN}{\mathbb{N}}
\newcommand{\Rcal}{\mathcal{R}}
\newcommand{\Dcal}{\mathcal{D}}
\newcommand{\Vcal}{\mathcal{V}}
\newcommand{\defeq}{\mathrel{\mathop:}=}
\newcommand{\SPD}{\text{SPD}}
\newcommand{\EXP}{\text{Exp}}
\newcommand{\LOG}{\text{Log}}
\newcommand{\EE}{\mathbb{E}}
\newcommand{\x}{\textbf{x}}
\newcommand{\M}{\mathcal{M}}
\newcommand{\tr}{\text{tr}}
\DeclareMathOperator*{\argmin}{\arg\!\min}
\newcommand{\mV}{\mathcal{V}}
\newcommand{\Bbeta}{\bm{\beta}}
\newtheorem{assumption}[theorem]{Assumption}
\begin{document}

\title{Finding Differentially Covarying Needles in a Temporally Evolving Haystack: A Scan Statistics Perspective}

\author{\name Ronak Mehta\textsuperscript{1}  \email ronakrm@cs.wisc.edu \\
		\name Hyunwoo J. Kim\textsuperscript{1} \email hwkim@cs.wisc.edu \\
        \name Shulei Wang\textsuperscript{2} \email shulei@stat.wisc.edu \\
        \name Sterling C. Johnson\textsuperscript{3} \email scj@medicine.wisc.edu \\
	    \name Ming Yuan\textsuperscript{2} \email myuan@stat.wisc.edu \\
        \name Vikas Singh\textsuperscript{1,4} \email vsingh@biostat.wisc.edu \\
	 \AND
	    \addr \textsuperscript{1}Computer Sciences Department\\
	    \textsuperscript{2}Department of Statistics\\
		\textsuperscript{3}Department of Medicine\\
		\textsuperscript{4}Department of Biostatistics\\
        University of Wisconsin-Madison\\
        Madison, WI 53706, USA
     \AND
       \name Project webpage: \url{pages.cs.wisc.edu/~ronakrm/research/covtraj/} \\
      \name Video summary: \url{vimeo.com/205606140}
       }


\maketitle

\begin{abstract}
Recent results in coupled or temporal graphical models offer schemes for estimating the relationship structure 
between features when the data come from
related (but distinct) longitudinal sources. A novel application of these ideas is for analyzing group-level differences, i.e., in identifying if {\em trends} of estimated objects (e.g., 
covariance or precision matrices) are different across disparate conditions (e.g., gender or disease). Often, poor effect sizes make detecting the \textit{differential} signal 
over the {\em full} set of features difficult: for example, 
dependencies between only a {\em subset of features} may manifest differently across groups.
In this work, we first give a parametric model 
for estimating trends in the space of $\SPD$ matrices as a function of one or more covariates. We then generalize scan statistics to graph structures, 
to search over distinct subsets of features (graph partitions) whose temporal dependency structure may show statistically 
significant group-wise differences.
We theoretically analyze the Family Wise Error Rate (FWER) and bounds on Type 1 and Type 2 error. 
On a cohort of individuals with risk factors for Alzheimer's disease (but otherwise cognitively healthy), 
we find scientifically interesting 
group differences where the default analysis, i.e., models estimated on the {\em full graph}, do not survive reasonable 
significance thresholds. 

\end{abstract}

\begin{keywords}
  Manifold Statistics, Scan Statistics, Longitudinal Analysis
\end{keywords}

\clearpage

\section{Introduction}
Multivariate data analysis exploiting the conditional independence structure between features or covariates using 
undirected graphical models is now standard within any data analysis toolbox. 
When the data are multivariate Gaussian, the zeros in the inverse covariance (precision) matrix give conditional independences 
among the variables \cite{lauritzen1996graphical}. Further, if the precision matrix is sparse, we can  
derive dependencies between features when the data are high-dimensional and/or the number of measurements are small. 
The estimation of a graphical model
has been extensively studied
and a rich literature is available describing 
its statistical and algorithmic properties \cite{koller2009probabilistic,jordan1998learning}. 
For instance, the so-called \textit{graphical lasso} formulation uses an $\ell_1$-norm penalty on the 
precision matrix and is widely used, and consistency properties 
in the large $p$ regime \cite{cai2011constrained,friedman2008sparse,yuan2010high} are now well understood. 
These formulations have also been extended to various transformations of Gaussian distributions (e.g., non-paranormal)
using rank statistics \cite{liu2009nonparanormal,xue2012regularized,liu2012high}.

{\em Coupled and Temporal Graphical Models.} 
Often, data come from two (or more) disparate sources or multiple timepoints.
Within the last few years, a few proposals have 
described strategies for linking the sparsity patterns of multiple graphical models, e.g., using a fused lasso 
penalty \cite{danaher2014joint} \cite{yang2015fused}. Observe that 
if the data sources correspond to {\em longitudinal} acquisitions, we should expect 
the `structure' to gradually evolve.
Several authors have offered generalizations to address this problem: \cite{zhou2010time} removes the assumption
that each graph is independent and structurally `close'.
Instead, \cite{zhou2010time} can be thought of as a growth model \cite{mcardle2000introduction} defined on these structures: they show how non-identically distributed graphs can be learned over time. 
Recently, the nonparametric procedure in \cite{qiu2015joint} extends these ideas
to handle multiple sources, each with multiple samples.

The ideas in the literature so far to ``couple'' multiple graphical model estimation modules are mostly nonparametric. 
While such a formulation offers benefits, in many estimation problems, 
parametric models may 
be more convenient for downstream statistical analysis,
particularly for hypothesis testing \cite{hardle1993comparing,geer2000empirical,roehrig1988conditions}.
Given that the topic of \textit{coupled} graphical models, by itself, is fairly recent, algorithms for {\em parametric estimation} of 
temporal or coupled Gaussian graphical models have not yet been heavily studied. 
This will involve parameterizing {\em trends} in the highly structured nature of the `response' variable ($\SPD$ matrices). 
We find that parametric formulations for manifold-valued data {\em have} been proposed recently \cite{hjkimcvpr2014,cornea2016regression}. 
Because $\SPD$ matrices form a Riemannian manifold, algorithms
that estimate a parametric model respecting the underlying Riemannian metric are more suitable in many applications as opposed to assuming a Euclidean metric 
on positively or negatively curved spaces \cite{xie2010statistical, fletcher2007riemannian, jayasumanakernel}. We will make a few simple modifications 
(for efficiency purposes) to such algorithms and make use of the estimated parameters for follow-up analysis. 


{\em Finding Group-wise Differences.} Assuming that we have a black-box procedure to estimate a parametric model on the $\SPD$ manifold available, 
in many tasks, such an estimation is merely a segue to other analyses designed to answer scientifically meaningful questions. 
For example, we are often interested in asking whether the temporally coupled model estimated using the procedure above differs 
in meaningful ways {\em across} groups induced by a stratification or dichotomous variable (e.g., gender or disease). For instance, is the `slope' in structured response space statistically different 
across education level or body mass index? 
While the body of work for graphical model estimation is mature, the literature describing hypothesis tests in this
regime \cite{diffnet,belilovsky2015hypothesis}
is sparse at best.
Given that such questions are simpler to answer with alternative schemes (with assumptions on the distributional properties of the data), e.g., structural equation modeling, 
latent growth models and so on \cite{ullman2003structural, mcardle2000introduction}, it seems that 
the unavailability of such tools is limiting the adoption of such ideas in a broader 
cross-section of science. We will seek to address this gap. 

{\em Needles in Temporal Haystacks.} If we temporarily set aside the potential value of a hypothesis test framework for temporal 
trajectories in graphical models, we see that
from an operational viewpoint, such procedures are most effective when a practitioner already has a precise scientific question in mind. In reality, however, 
many data analysis tools are deployed for exploratory analyses to inform an investigator as to which questions to ask. 
Being able to ``localize'' which parts of the model are different across groups can be very valuable. This ability actually 
benefits statistical power as well. Notice that when the stratified groups are not very different 
to begin with, e.g., healthy individuals with presence or absence of a genetic mutation, the
effect sizes (statistical difference between two groups) are likely to be poor.
Here, while the trends identified on the {\em full} precision matrix may still be different (i.e., there may be a {\em real} signal 
associated with a grouping variable), 
they may not be strong enough to survive significance thresholds. Ideally, what we need here are analogs of the widely used ``scan statistics'' 
for our hypothesis testing formulations for temporal graphical models --- to identify which {\em parts of the signal} are promising. 
Then, even if only a small subset of 
features were different across groups,
we may be able to identify these differential effects efficiently. This benefits Type 2 error, 
provides a practical turnkey product for an experimental scientist, and makes up the key technical results of our work.


Briefly, we provide \textbf{(i)} a simple and efficient parametric procedure for modeling temporally evolving graphical models, \textbf{(ii)} a 
hypothesis test for identifying differences between group-wise estimated models, and \textbf{(iii)} a scan
algorithm to identify {\em those subsets of the features which contribute to the group-wise differences}.
Together, these ideas offer a framework for identifying group-wise differences in temporally coupled graphical models.
From the experimental perspective, we find scientifically plausible results on 
a unique longitudinally tracked cohort of middle-aged (and young elderly) persons at risk for Alzheimer's disease due to family history, 
but who are otherwise completely cognitively healthy.

The rest of the paper is organized as follows. In Section \ref{sec:mglm} we present an efficient manifold regression procedure for 
modeling covariance trajectories, which serves as a blackbox module in our hypothesis testing framework. 
In Section \ref{sec:hyp-test}, we define our main hypothesis test for group difference analysis over covariance trajectories. 
In Section \ref{sec:loc}, we present a set of technical results describing our localization procedure based on scan statistics, 
as well as derive suitable size corrections to compare across feature subsets. Sections \ref{sec:loceval}, \ref{sec:pipeval}, and \ref{sec:wrap} conclude with
empirical evaluations of our model on synthetic data, various types of demographics/behavior data collected longitudinally 
in the United States from publicly available resources, and finally, our 
analysis on a unique longitudinal dataset (followed since 2001) from a preclinical Alzheimer's disease study involving approximately 1500 individuals.

\section{Characterizing Covariance Trajectories}
\label{sec:mglm}
Our main statistical testing framework, to be described shortly, needs an efficient means for calculating a ``trajectory" of the feature-by-feature interaction graphs over time
for the given longitudinal data. We now describe a scheme which offers this capability. 
Let $X_t \in \mathbb{R}^{n_t,p}$ be the design matrix of all $n_t$ samples at time $t$, where $t \in \{1,\ldots,T\}$, and $T$ is the total number of distinct timepoints.
We wish to capture the trends in the relationships between the features as a function of $t$. 
To evaluate the groupwise differences in changes of such interactions, we make use of the fact
that these interactions are commonly captured by correlation or conditional independence, represented by the covariance matrix (with normalized features)
and the precision matrix (the inverse of covariance matrix).

Here we simply use the covariance matrix for each timepoint $t$ to denote the interaction between features, 
$C_t = cov(X_t)$. 
Our goal now is to estimate the parameters of the function, $t \to C_t$. 
%
%
We may vectorize the covariance matrix and apply a linear model; its parameters
will give the trajectory in ``vectorized covariance space'' as we scan through $t$. 
But these predictions are {\em not} guaranteed to be valid  $\SPD$ matrices and even if a projection is performed to obtain a covariance estimate, distortions introduced by the process may be significant \cite{fletcher2013geodesic}.
It is well known that classical vector space models tend to be suboptimal 
in the manifold setting (covariance matrices live on the $\SPD$ manifold)
since they use Euclidean metrics which are defined in the ambient space. For manifold-valued data, Riemannian metrics are shown to be superior in many applications 
\cite{fletcher2007riemannian,banerjee2015nonlinear,jayasumanakernel,tuzel2007human}, 
%
and are increasingly being deployed in machine learning/statistics. 
We will utilize an appropriate statistical model informed by the manifold-structure of the data and 
then derive a hypothesis 
testing procedure to detect groupwise difference in the changes of interactions between features in longitudinal analysis.
%
To do so, we first summarize basic differential geometry notations \cite{do1992riemannian,lee2012introduction} and 
then describe our models. If desired,
any other (efficient) manifold-valued linear model \cite{fletcher2013geodesic} can be substituted in; no 
change in the workflow is needed. A reader familiar with manifold regression algorithms may consider this module as a black-box and skip ahead to
Section \ref{sec:hyp-test} which uses the parameter estimates from this procedure. 

\subsection{Riemannian Geometry}

Let $\Mc$ be a \textit{differentiable (smooth) manifold} in arbitrary dimensions.
A differentiable manifold $\Mc$ is a topological
space that is locally similar to Euclidean space and has a globally
defined differential structure. 
%
%
A \textit{Riemannian manifold} is a differentiable manifold $\Mc$ equipped with a smoothly varying inner product.
The \textit{geodesic curve} is the locally shortest path, analogous to straight lines in $\mathcal{R}^{p}$ --- this geodesic 
curve will be the object that defines the trajectory of our covariance matrices in $\SPD$ space. 
Unlike the Euclidean space, note that there may exist multiple geodesic curves between two points on a curved manifold. 
So, the \textit{geodesic distance}
between two points on $\Mc$ is defined as the length of the {\em shortest} geodesic curve connecting two points.
The geodesic distance helps in measuring the error of our trajectory estimation (analogous to a Frobenius or $\ell_2$ norm based loss in the Euclidean setting).
The geodesic curve from $y_i$ to $y_j$  is parameterized by a tangent vector in the tangent space anchored at $y_i$ with an exponential map $\EXP(y_i,\cdot ): T_{y_i}\Mc \rightarrow \Mc$. 
The inverse of the exponential map is the logarithm map, $\LOG(y_i,\cdot):\M \rightarrow T_{y_i}\M$. These two operations move us back and forth between 
the manifold and the tangent space. For completeness, Table \ref{tab:comp1} shows corresponding operations in the Euclidean space and Riemannian manifolds.
\begin{table}[]
	{
		\begin{center}
			\begin{tabular}{| l | l | l | }
				\hline
				Operation & Euclidean & Riemannian  \\  
				\hline 
				Subtraction & $\overrightarrow{x_i x_j} = x_j - x_i$ & $\overrightarrow{x_i x_j} = \LOG(x_i,x_j)$ \\ 
				 Addition & $x_i + \overrightarrow{x_j x_k}$ & $\EXP(x_i,\overrightarrow{x_j x_k})$ \\     
				 Distance  & $\| \overrightarrow{x_i x_j} \|$ & $\|\LOG(x_i,x_j) \|_{x_i}$ \\ 
				Mean  & $\sum_{i=1}^{n} \overrightarrow{\bar{x}x_{i}}=0$ &  $\sum_{i=1}^{n} \LOG(\bar{x}, x_i)=0$  \\ 
				Covariance & $\EE \left [ (x_i - \bar{x})(x_i - \bar{x})^{T} \right ]$& $\EE \left [ \LOG(\bar{x}, x)\LOG(\bar{x}, x)^{T} \right ]$\\ [1ex] \hline 
			\end{tabular}
		\end{center}
	}
	\caption{\label{tab:comp1}\footnotesize Basic operations in Euclidean space and Riemannian manifolds.}
\end{table}
Separate from the above notation, matrix exponential (and logarithm) are simply $\exp(\cdot)$ (and $\log(\cdot)$).  
%
Finally, \textit{parallel transport} is a generalized parallel translation on manifolds. Given a differentiable curve $\gamma : \mathcal{I} \rightarrow  \Mc$, where $\mathcal{I}$ is an open interval, 
the parallel transport of $v_0 \in T_{\gamma(t_0)}\Mc$ along curve $\gamma$ can be interpreted as the parallel translation of $v_0$ on the manifold preserving its length and the angle between $v (t)$ and $\gamma$. 
The parallel transport of $v$ from $y$ to $y'$ is $\Gamma_{y\rightarrow y'}v$.

\subsection{Riemannian Manifold Regression}
Several regression models for manifold-valued data have been proposed recently, a majority of 
which are nonparametric \cite{jayasumanakernel,banerjee2015nonlinear}. 
Because of the longitudinal nature of our dataset (and recruitment considerations in neuroimaging studies),
sample sizes do not exceed a few hundred participants (typically much smaller). 
We have found that generally, in this regime, parametric methods are better suited and also offer other benefits for downstream applications. 
Next, we will give a simple parametric model for this problem. 
\begin{figure*}[t]
  \centering
  \includegraphics[width=0.49\textwidth,trim={10 40 10 220},clip]{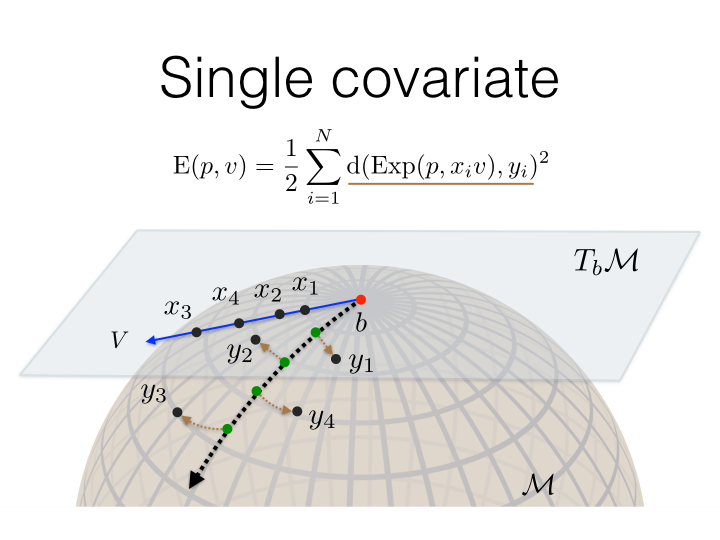}
    \includegraphics[width=0.49\textwidth,trim={10 40 10 220},clip]{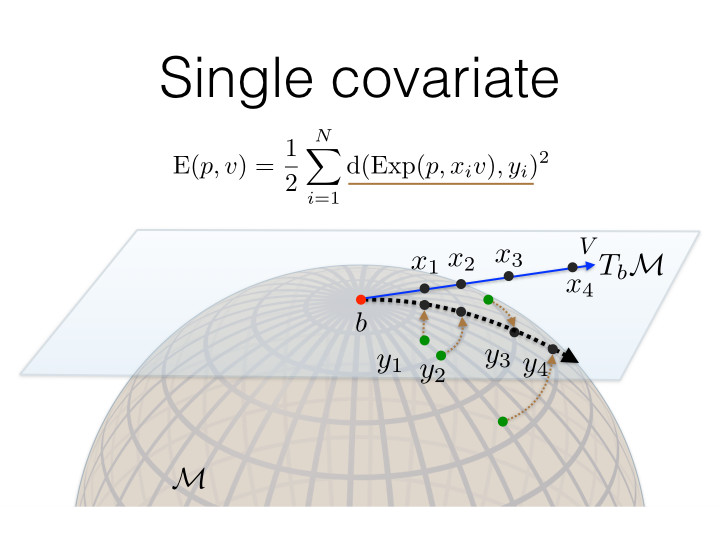}
  \caption{\label{fig:manifold}Group-wise MMGLM: The left and right figures represent two linear models on the $\SPD(p)$ manifold. Points $x_i$ in the tangent space are our covariate or predictor, and points $y_i$ in the manifold space represent $\SPD(p)$ matrices. In our regression setting, we wish to minimize the error (brown curves) between the estimation and the sample points. Because each linear model has a different base point, the trajectories cannot be directly compared as in the Euclidean setting.}
\end{figure*}
Let $x$ and $y$ be vectors in $\mathbb{R}^p$ and $\mathbb{R}^{p'}$ respectively.
\begin{definition} (Standard GLM.) The Euclidean multivariate multilinear model is 
{\begin{equation}
	\begin{split}
	y  = \beta^0 + \beta^{1} x^{1} + \beta^{2} x^{2} + \ldots +\beta^{p} x^{p} + \epsilon
	\end{split}
	\label{eq:generallinear}
	\end{equation}}
where $\beta^0$, $\beta^{i}$ and the error $\epsilon$ are in $\mathbb{R}^{p'}$ and $x = [x^1 \ldots x^p ]^{T}$ are the 
predictor variables.
\end{definition}
Henceforth, we will use the terms \textit{covariate} and \textit{predictor} interchangeably to describe those specific features we wish to control for in our model (e.g., time-points in our experiments).
For manifold-valued data, we adapt the formulation proposed by \cite{hjkimcvpr2014}.
\begin{definition} The Manifold Multivariate General Linear Model (MMGLM) is defined as 
{\begin{equation}
	\begin{split}
	&\min_{b \in \Mc, \forall j, V^j \in T_{b}\Mc} \quad \frac{1}{2} 
	\sum_{i=1}^{N}d(\EXP(b, \BV x_i),y_i)^2,
	\end{split}
	\label{eq:multigr}
	\end{equation}}
where $\BV x_i \defeq \sum_{j=1}^{n}V^j x_{i}^{j}$ and $d(\cdot, \cdot)$ is the geodesic distance between $\hat{y}_i:=\EXP(b, \BV x_i)$ and $y_i$. 
\end{definition}
This formulation generalizes \eqref{eq:generallinear}, by replacing the intercept $\beta^0$ and each vector $\beta^j$ for a covariate with a 
base point $b \in \Mc$ and a geodesic basis $V^j \in T_{b}\Mc$ respectively. The geodesic basis $V^j$ at $b$ parameterizes a geodesic curve $\EXP(b,V^jx^j)$.
Intuitively, this model is a `generalized' linear model with the inverse exponential map $\EXP^{-1}$ (or logarithm map $\LOG$) as a 
`link' function \cite{hjkimcvpr2014,cornea2016regression}. When the covariate/predictors are univariate, we will obtain a single geodesic curve, modeled via 
the so-called Geodesic Regression \cite{fletcher2013geodesic}.

\subsection{Efficient Estimation of Trajectories}
\label{sec:effest}
The objective in \eqref{eq:multigr}, can be solved by both gradient descent \cite{fletcher2013geodesic,hjkimcvpr2014} and MCMC methods \cite{cornea2016regression}. 
Unfortunately, these schemes can be expensive, especially when the dimension of the manifold is large. Further, if the algorithm needs to be run a 
large number of times, the computational footprint quickly becomes prohibitive. 
Motivated by these considerations, we use a so-called log-Euclidean approximate algorithm introduced in \cite{hjkimcvpr2014} with some adaptations, which requires mild assumptions on the manifold-valued data. 

Recall that in classical ordinary least squares (OLS), 
the regression curve goes through the mean of covariates and response variables, i.e., $y-\bar{y} = \beta(x-\bar{x})$.
Similarly, we assume that geodesic curves go through the mean of response variables on the manifold. Then, the base point, or intercept, ``$b$'' in \eqref{eq:multigr} can be approximated by the {\em manifold-valued mean of the sample points}, the Karcher mean \cite{karcher1977riemannian}. The propositions derived from \cite{hjkimcvpr2014} lead directly to the following. 
\begin{proposition}
Let $\bar{C}$ be the unique Karcher mean of a sufficiently close set of covariance matrices that lie on a curve $\Omega$. Then $\bar{C} \in \Omega$, and for some tangent vector $V \in T_{\bar{C}}\Mc$ and each $C$, there exists $x \in \mathbb{R}$ such that $C = \EXP(\bar{C},Vx)$. 	
\end{proposition}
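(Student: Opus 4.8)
The plan is to exploit the fact that the trajectory $\Omega$ in the relevant (univariate) MMGLM setting is a \emph{geodesic} curve, and to combine this with the first-order optimality condition characterizing the Karcher mean. First I would parameterize $\Omega$ by arc length as a unit-speed geodesic $\gamma$, so that each covariance matrix in the set is $C_i = \gamma(s_i)$ for some scalar $s_i \in \mathbb{R}$. The phrase \emph{sufficiently close} is what I would lean on throughout: it guarantees that all the $C_i$ lie inside a single geodesically convex (injectivity-radius) neighborhood, so that between any two sampled points the minimizing geodesic is unique and is exactly the corresponding sub-arc of $\Omega$. This is the geometric fact that makes the rest routine, and for $\SPD$ matrices under the affine-invariant metric (a nonpositively curved Hadamard manifold) it holds globally.

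For the first claim, $\bar{C} \in \Omega$, I would use the characterization of the Karcher mean as the zero of the gradient of $f(q) = \frac{1}{2}\sum_i d(q, C_i)^2$, namely $\sum_i \LOG(q, C_i) = 0$. The key computation is that for a point $q = \gamma(s)$ sitting on the geodesic, the log map to another point $\gamma(s_i)$ on the same geodesic points along $\gamma$: concretely $\LOG(\gamma(s), \gamma(s_i)) = (s_i - s)\,\dot{\gamma}(s)$, since the connecting minimizing geodesic is the sub-arc of $\Omega$ and $\gamma$ has unit speed. Summing then gives $\sum_i \LOG(\gamma(s), \gamma(s_i)) = \big(\sum_i (s_i - s)\big)\,\dot{\gamma}(s)$, which vanishes precisely when $s = \bar{s} := \frac{1}{n}\sum_i s_i$. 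Thus $\gamma(\bar{s})$ is a critical point of $f$; because $f$ is geodesically (strictly) convex on the convex neighborhood and the Karcher mean is assumed unique, this critical point is the mean, so $\bar{C} = \gamma(\bar{s}) \in \Omega$.

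The second claim then falls out of the same parameterization. Having located $\bar{C} = \gamma(\bar{s})$ on $\Omega$, I would set $V := \dot{\gamma}(\bar{s}) \in T_{\bar{C}}\Mc$, a single tangent vector anchored at the mean, and for each sample define the scalar $x := s_i - \bar{s}$. Then $\EXP(\bar{C}, V x) = \EXP(\gamma(\bar{s}), (s_i - \bar{s})\dot{\gamma}(\bar{s})) = \gamma(s_i) = C_i$, which is exactly the asserted representation. In words, the one geodesic direction $V$ at the mean, together with the scalar coordinates $x_i$, reproduces every point of the curve, which is precisely what lets the MMGLM replace the base point $b$ by $\bar{C}$.

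The part I expect to be the real work is justifying the identity $\LOG(\gamma(s), \gamma(s_i)) = (s_i - s)\dot{\gamma}(s)$ and, relatedly, that the global minimizer of $f$ over all of $\Mc$ (not merely over $\Omega$) lands on $\Omega$. Both hinge on the minimizing-geodesic-is-the-sub-arc property, which can fail if the points wrap past the injectivity radius or if $\Omega$ were not a genuine geodesic; the \emph{sufficiently close} hypothesis is exactly what rules this out. The only other care point is confirming that a vanishing gradient identifies the minimizer rather than a saddle, which I would handle via geodesic convexity of $f$ on the convex neighborhood, after which the assumed uniqueness of the Karcher mean closes the argument.
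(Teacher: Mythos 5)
Your proof is correct and is essentially the same argument the paper relies on: the paper itself gives no in-text proof, deferring to \cite{hjkimcvpr2014}, and the derivation there is exactly your route---parameterize $\Omega$ as a unit-speed geodesic $\gamma$ with $C_i=\gamma(s_i)$, invoke the first-order Karcher condition $\sum_i \LOG(q,C_i)=0$ together with $\LOG(\gamma(s),\gamma(s_i))=(s_i-s)\dot{\gamma}(s)$ to place the mean at $\gamma(\bar{s})$ with $\bar{s}$ the average parameter, and then take $V=\dot{\gamma}(\bar{s})$ and $x_i=s_i-\bar{s}$. Your two care points are also well placed: the statement's ``curve'' must indeed be read as a geodesic (as the MMGLM context intends; for an arbitrary curve the claim is false), and the ``sufficiently close'' hypothesis---or, for $\SPD$ with the affine-invariant metric, the Hadamard (nonpositive-curvature) structure---is exactly what guarantees the minimizing-sub-arc identity and the strict geodesic convexity that upgrades your critical point to the unique Karcher mean.
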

This allows us to bypass the fairly involved variational procedure to estimate the base point $b$.

With this approximation of $\hat{b}$ via $\bar{y}$, the remaining variables to optimize are the tangent vectors $\BV$. 
We do so by taking advantage of log-Euclidean schemes. Once the base point is established as the Karcher mean, each data point on the manifold is projected into the tangent space at that point: $\LOG(\bar{y},y)$. These ``centered" points $\tilde{y}$ are now Euclidean, and if the covariates are centered as well ($\tilde{x}$), a closed form solution exists in the standard form of $\BV = \tilde{y}\tilde{x}^\top (\tilde{x}\tilde{x}^\top)^{-1}$. 

In this setting, it is often assumed that two points $y_1,y_2$ have a distance defined as $d(y_1,y_2) := \| \LOG(y_1,y_2) \|_{y_1} \approx \| \LOG(b,y_1)-\LOG(b,y_2) \|_{b}$. However,
on $\SPD$ manifolds with an affine invariant metric, each tangent space has a different inner product varying as a function of the base point $b$, i.e., $\langle  u,v\rangle_b:= \tr (b^{-1/2}ub^{-1}vb^{-1/2})$. This makes comparison of trajectories difficult without moving to tangent bundle formulations. This issue is discussed in some detail in \cite{muralidharan2012sasaki,hong2015group}. However, note that
\begin{remark}
When the base point $b$ is the identity $I$, then the inner product is exactly the Euclidean metric $\langle  u,v\rangle_b:= \tr (b^{-1/2}ub^{-1}vb^{-1/2})=\tr (uv)=\tr (u^Tv)$.
\end{remark}
This follows from the fact that $u$ and $v$ are symmetric matrices on $\SPD(p)$. We take advantage of this property
through \textit{parallel transport}. Specifically, we can bring all of the data to $T_{I}\Mc$ which will allow for a meaningful comparison of two tangent vectors from different base points.
Similar schemes have been used for projection on submanifolds in \cite{xie2010statistical} and other problems \cite{sommer2014optimization}. 
%
With a fast algorithm to compute \eqref{eq:multigr} available, we can now accurately model longitudinal trajectories of covariances matrices.
Our statistical procedure described next simply assumes 
the availability of some suitable scheme to solve the manifold-regression as defined in \eqref{eq:multigr} efficiently and does not
depend on particular properties of the foregoing algorithm. 

\section{Test Statistics for $\SPD(p)$ Trajectories}
\label{sec:hyp-test}
With an algorithm to construct a regression model for covariance matrix responses in hand, we can now describe a key
component of our contribution: a test statistic which allows addressing the main question of interest: 
{\em Is the progression/trajectory of covariance matrices (over time) different across two groups?} In the standard two-sample testing problem, a hypothesis test is set up
to check if the parameters of each group are significantly different:
\begin{align}\label{eq:hyptest}
H_0: \theta_1 = \theta_2 \quad vs. \quad H_A: \theta_1 \neq \theta_2
\end{align}
Recall that in a general linear model (GLM), when testing for mean group differences, the test parameters are the regression slopes from a standard GLM fit. 
In our setting, the parameters of interest are the population covariance trajectories estimated from the manifold regression in \eqref{eq:multigr}, see Fig. \ref{fig:manifold}. 
While the trajectories and the slopes are related, note that our parameters are estimated {\em on the manifold}. 
Two unique manifold trajectories, when projected as simple multivariate responses in Euclidean space, may not be significantly different under the GLM hypothesis testing framework, as has been observed by \cite{du2014geodesic}. Returning to our longitudinal trajectory formulation, we have the following na\"ive Covariance GLM:
\begin{definition} Let $vec(C_{g,t})$ be the vectorized covariance matrix at timepoint $t$ for group $g \in \{1,2\}$. Then the 
na\"ive Covariance GLM is defined as
\begin{align}\label{eq:euclideanGLM}
vec(C_{g,t}) = \beta_g^{0} + \beta_g t + \epsilon
\end{align}
with the slope $\theta = \beta$ in the hypothesis test in \eqref{eq:hyptest}, and $vec(\cdot)$ is the vectorized form of the input matrix. 
\end{definition}
With this model, our hypothesis testing reduces to a simple difference of slopes, which is well-studied in classical statistics literature.
\begin{definition}\cite{seber2003linear}
Let $\Bbeta_1,\Bbeta_1$ be the multivariate slopes calculated from estimating \eqref{eq:euclideanGLM}. Then an $\alpha$-level hypothesis test rejects the null hypothesis $\Bbeta_1 = \Bbeta_1$ when $L > \chi^2_{p}|_{1-\alpha}$, where
\begin{equation}
L = (\hat{\Bbeta}_1 - \hat{\Bbeta}_2)\Sigma^{-1}(\hat{\Bbeta}_1 - \hat{\Bbeta}_2)
\end{equation}
\label{eq:euclideanhyptest}
\end{definition}
Knowing that the response space is structured, i.e., our covariance matrices lie on the $\SPD$ manifold, we seek a more appropriate test and corresponding test statistic which 
adequately captures this knowledge. 

Observe that we can directly apply the manifold regression in \S \ref{sec:mglm} to solve for a linear model on the manifold. 
That is, we construct the manifold GLM as
\begin{definition} Let $C_{g,t}$ be the covariance matrix at timepoint $t$ for group $g \in \{1,2\}$. Then the 
	Longitudinal-Covariance GLM (LCGLM) is defined as
	\begin{align}\label{eq:LCGLM}
		C_{g,t} = \EXP(b_g, \BV_g t)
	\end{align}
	with $b_g$ and $\BV_g$ being the base point and tangent vector respectively, as described in \S\ref{sec:mglm}.
\label{eq:lcglm}
\end{definition}
But instead of solving $p(p-1)/2$ independent regressions, now we must concurrently solve for the entire manifold-valued response variable.
In this case, we cannot directly compare our trajectories because they lie in {\em different} tangent spaces. To accurately compare two tangent vectors, 
we must parallel transport both vectors to the same tangent space. Once they are both in the same space, we can construct a simple test statistic for the trajectory difference.
\begin{equation}
L = \|\Gamma_{b_1 \rightarrow I} \BV_1 - \Gamma_{b_2 \rightarrow I} \BV_2 \|_{I}^2
\label{eq:traj}
\end{equation}
Recall that the inner product at the Identity $I$ coincides with the Euclidean metric. This can now be naturally interpreted as a difference of slopes, and together with a standard Euclidean Normal noise assumption yields the following hypothesis test.
\begin{proposition}\label{prop_prodstat}
Assume that $\Gamma_{b \rightarrow I} \BV$ is normally distributed $N(0,I)$. Then the statistic defined in \eqref{eq:traj} follows a $\chi^2_{p}$ distribution with $p$ degrees of freedom, and the threshold test in \eqref{eq:euclideanhyptest} is an $\alpha$-level hypothesis for the covariate trajectory group difference.
\end{proposition}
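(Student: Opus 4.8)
The plan is to reduce the manifold test statistic in \eqref{eq:traj} to a standard Euclidean quadratic form and then invoke the classical characterization of the chi-squared law as a sum of squared independent standard normals. First I would record the two structural facts that license this reduction. By the Remark, the affine-invariant inner product $\langle u,v\rangle_b = \tr(b^{-1/2}ub^{-1}vb^{-1/2})$ collapses at $b=I$ to $\tr(u^\top v)$, the ordinary Euclidean inner product on the vectorized symmetric matrix; hence $\|\cdot\|_I^2$ in \eqref{eq:traj} is literally the Frobenius norm squared, i.e.\ a plain sum of squared coordinates, once we identify $T_I\Mc \cong \RR^p$ with the space in which the vectorized slope lives. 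Second, parallel transport $\Gamma_{b\rightarrow I}$ is a linear isometry between tangent spaces (it preserves the metric by definition of the Levi-Civita connection), so it carries a Gaussian to a Gaussian; this is exactly what makes it legitimate to state the normality hypothesis directly at the identity rather than at the two base points $b_1,b_2$.

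Next I would set $Z := \Gamma_{b_1\rightarrow I}\BV_1 - \Gamma_{b_2\rightarrow I}\BV_2 \in T_I\Mc \cong \RR^p$, so that $L = \|Z\|_I^2 = Z^\top Z$. Under $H_0$ the two transported population slopes coincide, hence $\EE[Z]=0$; by the assumed normality together with closure of Gaussians under linear combinations, $Z$ is a centered Gaussian, $Z\sim N(0,\Sigma)$, where $\Sigma$ is the covariance of the slope difference, i.e.\ the same $\Sigma$ that appears in \eqref{eq:euclideanhyptest}. Whitening via $W := \Sigma^{-1/2}Z$ yields $W\sim N(0,I_p)$, and then the Mahalanobis form $Z^\top\Sigma^{-1}Z = W^\top W = \sum_{k=1}^p W_k^2$ is a sum of $p$ independent squared standard normals, which is $\chi^2_p$ by definition. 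The simplified phrasing ``$\Gamma_{b\rightarrow I}\BV \sim N(0,I)$'' in the hypothesis is then understood as the statement that this whitening has already been absorbed into the transported coordinates, so that the raw squared norm in \eqref{eq:traj} is itself $\chi^2_p$-distributed.

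With the null distribution established, the level claim is immediate: taking the critical value to be the $(1-\alpha)$ quantile $\chi^2_p|_{1-\alpha}$ gives $\PP_{H_0}(L > \chi^2_p|_{1-\alpha}) = \alpha$, so the rejection rule in \eqref{eq:euclideanhyptest} is an exact $\alpha$-level test for the covariate trajectory group difference.

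I expect the main obstacle to be the bookkeeping that reconciles the unstandardized squared norm of \eqref{eq:traj} with the Mahalanobis form of \eqref{eq:euclideanhyptest} and the $N(0,I)$ assumption: one must be careful that the covariance $\Sigma$ used to whiten $Z$ is the covariance of the \emph{difference} of the two transported slopes, accounting for both groups' sampling variability, so that exactly $p$ degrees of freedom are recovered and no stray scale factor (e.g.\ a spurious factor of $2$ from two independent $N(0,I)$ terms) survives. A secondary point worth verifying carefully is that $\Gamma_{b\rightarrow I}$ is genuinely a linear isometry onto $T_I\Mc$ under the affine-invariant metric, since it is precisely this isometry that transfers the Euclidean interpretation of the Remark to the norm of the transported difference and guarantees that the two transported slopes remain jointly Gaussian.
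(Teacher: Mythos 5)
Your proposal is correct and is exactly the argument the paper intends: the paper states this proposition without a separate proof, treating it as immediate from the Remark (the affine-invariant metric at $I$ reduces to the Euclidean one on symmetric matrices) combined with the classical characterization of $\chi^2_p$ as a sum of squared independent standard normals, citing \cite{seber2003linear} for the Euclidean analogue in \eqref{eq:euclideanhyptest}. You also correctly identify and resolve the one loose point in the statement --- that the $N(0,I)$ hypothesis must be read as applying to the already-whitened transported slope difference, so that no covariance $\Sigma$ or stray factor of $2$ survives in \eqref{eq:traj} --- which is precisely how the paper itself treats it when it later remarks that under the normal noise assumption the covariance is the identity and the Mahalanobis form collapses to the plain squared norm.
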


\subsection{Incorporating First-Order Differences}
In many real world situations, first-order information in the data is often valuable in identifying group differences. Restricting 
our analysis to only the second-order interactions, i.e., covariances, may be inefficient (or sub-optimal) when the mean signal difference between groups is large. Our construction easily extends to these cases. Particularly, the \textit{product space} over both means and covariances is in $\mathbb{R}^{p} \times \SPD{(p)}$. 
\begin{remark}
The typical GLM on the first order information is defined in the standard Euclidean space. So, computing the regression in the product space 
$\mathbb{R}^{p} \times \SPD{(p)}$ 
amounts to simply 
computing the regression on the first and second order statistics (mean and covariance) separately.
\end{remark}
The above statement suggests that by applying the manifold regression to the covariances and the standard regression model for the means, 
we are directly solving the product space regression problem, incorporating both first and second order statistics.
However, in these cases, the statistic defined above in \eqref{eq:traj} does {\em not} directly take into account the potential difference in means. 
However, given our Normal noise assumption we can easily invoke the standard Gaussian multivariate likelihood statistic for group differences.
\begin{definition}
Let $\hat{\mu}_t, \hat{\Sigma}_t$ be the estimated mean and covariance from the standard linear model and our manifold-covariance GLM respectively. Then the Gaussian likelihood of our data $X$ is
\begin{align}
P(X|\hat{\mu},\hat{\Sigma}) = \prod_{t=1}^{T}\prod_{i=1}^{n_t} P(X_t | N(\hat{\mu}_t,\hat{\Sigma}_t)),
\end{align}
where $X_t$ is the subset of our data collected at timepoint $t$. Additionally, we can define a standard likelihood ratio test statistic as:
\begin{align}
L_{prod} = \frac{P(X_1|\hat{\mu}_1,\hat{\Sigma}_1)P(X_2|\hat{\mu}_2,\hat{\Sigma}_2)}{P(X_{1,2}|\hat{\mu}_{1,2},\hat{\Sigma}_{1,2})}
\end{align}
\end{definition}
This statistic is again $\chi^2_{p}$-distributed \cite{seber2003linear}, and an $\alpha$-level hypothesis test for group difference analysis 
can be defined in the same way as above. While our manifold regression modeling is focused on the case of centered data (where the mean signal may not be 
significantly different between the groups), we use the product space construction, wherever appropriate, in experimental evaluations.

\section{Localizing Group Differences for $\SPD(p)$ Trajectories}
\label{sec:loc}

The above procedure provides a precise mechanism to derive a statistic from the group-wise covariance matrix trajectories.
However, when the effect sizes are poor, any scheme operating on 
the trajectories of the {\em full covariance matrix} may still fail to identify group differences (as is the case in our experiments). To improve statistical power, localizing the process of computing the trajectories {\em 
  only to the relevant features} is critical. 
%
%
To this end, we consider the following global hypothesis testing problem
\begin{equation*}
H_0: \forall R, \Bbeta_1^R=\Bbeta_2^R \quad vs. \quad H_1: \exists R, \Bbeta_1^R\ne\Bbeta_2^R,
\end{equation*}       
where $\beta$ denotes the slope and $R$ is the region of the covariance matrix
{\em which only includes the relevant features}, see Fig. \ref{fg:ball}.
It turns out that by adapting {\em Scan statistics} \cite{fan2012control, arias2011detection}, we will be able to exclude the effect of irrelevant regions of the covariance 
matrix in the calculated trajectories. 
By extending this concept to graphs, we obtain an algorithm to identify {\em subsets of features} of the covariance matrix which show group differences that 
are otherwise unidentifiable, in a statistically rigorous way. 

\subsection{Scan Statistics}
Scan statistics are a valuable tool for structured multiple testing.
In its simplest form, we can consider a setting where we place a window (or box) over a region $R$ in an image and calculate 
a local statistic $L_R$, e.g., an average or a response to a convolution filter. 
Then, the window can be raster scanned at various locations in the image ($\Rcal$) and the maximum 
over the set of local statistics can be called the scan statistic. 
Intuitively, if the image is assumed to be a Gaussian random field, we can set up a null hypothesis using a
critical value and finding a statistically significant signal (i.e., regions) corresponds to comparing 
the local region-wise statistic with the critical value.  
Of course, there is flexibility in terms of specifying properties of the regions as described next. 
 
\begin{definition} Let $\Rcal$ be the collection of all possible structured regions, and $L_R$ be some statistic over region $R$, a structured subset of $\Rcal$. The scan statistic is defined as $L^* = \max_{R\in\Rcal} L_R$.
\end{definition}
Recent results in scan statistics show how \textit{size corrections} can be used to increase detection power in multi-scale analysis with 
nice guarantees \cite{walther2010optimal,wang2016structured}. 
%
To utilize these ideas for our hypothesis test, we must extend scan statistics and these size corrections to a graph setting 
where the graph is induced by a sparse estimation of the precision matrix, e.g., graphical lasso (or any other algorithm of choice) over the features.
To do so, structured regions $R$ and a statistic $L_R$ on each region must be defined on the graph. Intuitively, in our case, 
$L_R$ must capture the ``difference'' in group-wise covariance trajectories. 
As we will describe shortly, it is in the context of this statistic where we utilize the LCGLM \eqref{eq:LCGLM}, which will be invoked at the {\em level of individual regions} $R$, one by one. 

Let $G:=(\mV,E)$ be a graph over the features (represented in the covariance matrix) with vertex set $\mV$ and edge set $E$. 
We define the structured region $R \subseteq G$ as a connected subgraph of $G$ corresponding to the 
selection of those vertices as our feature subset (block of the covariance matrix, see Fig. \ref{fg:ball}). 
A natural question is whether such an enumeration is tractable 
if the number of connected subgraphs $\Rcal$ is exponential. It turns out that if we make a mild assumption on the graph, 
the number of induced regions can be shown to be polynomially bounded. Further, it then naturally provides a {\em size correction}, the analog for a multiple testing adjustment. 

{\em Remarks.} In our motivating application, the group differences we seek to identify will involve a cohesive set of features that will be 
connected to each other, by definition (large changes in covariances indicate dependent features). 
Based on this observation, we assume that the true localized subgraph is a ``ball'' subgraph. 
\begin{definition} A ball subgraph consists of nodes with a given radius $r$ from a particular node (see Fig. \ref{fg:ball}). The collection of ball subgraphs is defined as
\begin{equation}
\Rcal=\left\{B(v,r):v\in \mV\ {\rm and}\ r\in \NN\right\}
\end{equation}
where the ball subgraph $B(v,r):=\{v^\prime \in V:d(v,v^\prime)\le r\}$, and $d(v,v^\prime)$ is the minimum length path connecting $v$ and $v^\prime$.
\end{definition}
%
With this assumption, it can be verified that we now only need to search a polynomially bounded set of regions.
\begin{remark}
The number of unique ball subgraphs in any graph $G$ is bounded above by $D|\mV|$, where $D$ is the diameter (longest chain) of the graph $G$.
\end{remark}
 On these regions 
(i.e., blocks of covariance matrix), 
we will invoke LCGLM to provide us a statistic $L_R$. This is just the difference in slopes of the calculated manifold regression across groups in \eqref{eq:traj}. 
We will iteratively obtain this statistic for distinct regions $R$ and find subgraphs that differ in their trajectories across groups using a size correction for hypothesis tests.
\begin{figure}[]
	\begin{center}
		{
			\begin{tikzpicture}
			\path[draw,line width=2pt,black!70!white] (-0.5,-0.5) -- (2,-0.5)--(2,2)--(-0.5,2)--cycle; 
			\fill[black!30!white,fill opacity=0.8] (0.75,0.75) -- (0.75,1.25) -- (0.25,1.25) -- (0.25,0.75) --cycle ; 
			\path[draw,line width=2pt,black!70!white] (-0.5,2)--(2,-0.5);
			
			\draw[-latex,thick,black!70!white,line width=3pt](2.5,1)
			to[out=0,in=180] (3.0,1);
			
			\node[shape=circle,draw=black,line width=2pt] (A) at (4,-0.2) {A};
			\node[shape=circle,draw=black,line width=2pt] (F) at (3.7,1) {F};
			\node[shape=circle,draw=black,line width=2pt] (C) at (6,1.2) {C};
			\node[shape=circle,draw=black,line width=2pt] (D) at (6,-0.2) {D};
			\node[shape=circle,draw=black,line width=2pt] (E) at (5,2) {E};
			\path [draw,line width=2pt,black!70!white] (A) edge node {}  (F);
			\path [draw,line width=2pt,black!70!white] (A) edge node {}  (C);
			\path [draw,line width=2pt,black!70!white] (E) edge node {}  (F);
			\path [draw,line width=2pt,black!70!white] (E) edge node {}  (C);
			\path [draw,line width=2pt,black!70!white] (C) edge node {}  (D);
			
			\draw[-latex,thick,black!70!white,line width=3pt](7.0,1)
			to[out=0,in=180] (7.5,1);
			
			\node[shape=circle,draw=black,line width=2pt] (B2) at (8.2,1) {F};
			\node[shape=circle,draw=black,line width=2pt] (C2) at (10.5,1.2) {C};
			\node[shape=circle,draw=black,line width=2pt] (E2) at (9.5,2) {E};
			\path [draw,line width=2pt,black!70!white] (E2) edge node {}  (B2);
			\path [draw,line width=2pt,black!70!white] (E2) edge node {}  (C2);
			
			\draw[thick,red] ([shift=(-40:2)]9.5,2.1) arc (-40:-152:2.1);
			\draw[thick,red] ([shift=(-40:2)]5,2) arc (-40:-150:2);
			\draw[thick,red] ([shift=(-30:0.5)]5,2) arc (-30:-170:0.5);
			\draw[thick,red] ([shift=(-50:2.8)]5,2) arc (-50:-140:2.8);
			
			\draw[thick,black](5.5,1.9)node[right]{$r=0$};
			\draw[thick,black](6.4,0.6)node[right]{$r=1$};
			\draw[thick,black](6.8,0)node[right]{$r=2$};
			\draw[thick,black](10,0.1)node[right]{$r=1$};
			\draw[thick,black](7.9,-0.5)node[right]{$B(E,1)=\{F,C,E\}$};
			
			\end{tikzpicture}
		}
	\end{center}
	\caption{\label{fg:ball} (left) A region of the sparse precision matrix, (center) The corresponding subgraph of that region, along with balls of varying radius from the root node $E$, (right) The ball subgraph constructed with $r = 1$.  These subgraphs with bounded radius act as the 
		structured regions on which scan statistics can be applied.}
\end{figure}
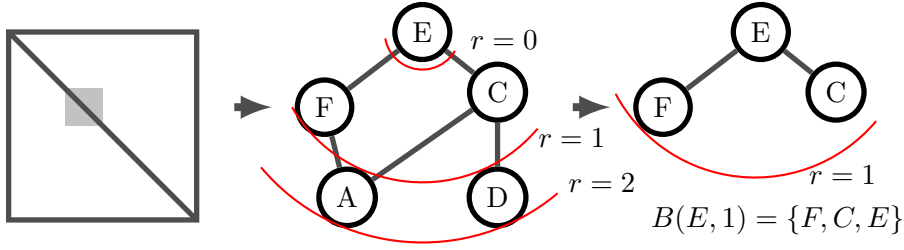

Let us revisit the standard linear model setting and assume that 
our slopes $\Bbeta_g^R$ correspond to the subset of slopes from features in $R$, and $\hat{\Bbeta}_g^R$ is an estimate of that slope. In this case, we have the following  statistic (see e.g. \cite{seber2003linear}),
\begin{align}
(\hat{\Bbeta}_1^R - \hat{\Bbeta}_2^R)\Sigma_R^{-1}(\hat{\Bbeta}_1^R - \hat{\Bbeta}_2^R) \sim \chi_{|E(R)|}^2,
\end{align}
where $\Sigma_R^{-1}$ is the covariance matrix of $\hat{\Bbeta_1^R} - \hat{\Bbeta_2^R}$. With a normal noise assumption, this covariance will 
be identity and the statistic would simply be the $\ell_2$-norm difference as in the classical analysis. 
To make the statistics comparable across {\em different sizes}, we use the standardized version of a $\chi_{|E(R)|}^2$ distribution,
{\small\begin{align}\label{eq:LRstat}
L_R = \frac{(\hat{\Bbeta}_1^R - \hat{\Bbeta}_2^R)\Sigma_R^{-1}(\hat{\Bbeta}_1^R - \hat{\Bbeta}_2^R) - E(R) }{ \sqrt{E(R)} }.
\end{align}}
We can extend this analysis to our manifold setting.
\begin{definition}
For a given structured region $R$, the region-based LCGLM is written as
{ \begin{equation}
(b_g^R,\BV_g^R) = \argmin_{(b^R,\BV^R) \in T\Mc^R} 
	\EE \left[ d(\EXP(b^R, \BV^R t_g),C_g^R)^2\right]
\end{equation}}
where $C_g^R$ is the covariance matrix subblock defined by features included in $R$ for group $g$ ($t_g$ is our univariate predictor, i.e., time).
\end{definition}
To compare the group trajectories, we first parallel transport each tangent vector to the identity as described in \S\ref{sec:mglm} and then compute the statistic in \eqref{eq:traj} given as $\|\Gamma_{b_1^R \rightarrow I} \BV_1^R - \Gamma_{b_2^R \rightarrow I} \BV_2^R \|_{I}^2$.
In the case of the product space construction, we apply the test in \eqref{prop_prodstat} to the data subset corresponding to the features 
in region $R$, with the same correction as in \eqref{eq:LRstat}.

{\em Summary.} We now have a region-based statistic for the manifold regression setting that is approximately normally distributed $N(0,1)$, allowing effective comparison across differently-sized regions.

\subsection{Size Correction}
A final unresolved yet important issue is that we must correct $L_R$ based on the number of edges $E(R)$ in $R$.
This has a direct consequence on detection power.
Observe that the normalization for size correction should be determined by the null distribution of $L_R$, i.e., 
when there is no slope difference in the trajectories between groups.
In order to derive a correction, we need to characterize the behavior of scan statistics within roughly similar regions, $\max_{R\in \Rcal(A)}L_R$, 
where $\Rcal(A)$ is the collection of region $R$s with similar size as $E(R)$,
\begin{align}
\Rcal(A)=\{R\in \Rcal:A/2<|E(R)|\le A\}.
\end{align}
Clearly, the behavior of $\max_{R\in \Rcal(A)}L_R$ depends on the ``complexity'' of $\Rcal(A)$. 
A clear understanding of how similar subgraphs relate to each other leads directly to a correction tied to their relative sizes.

To investigate the complexity of $\Rcal(A)$, we define the following quantities.
\begin{definition} The distance between subgraphs $R_1$ and $R_2$ can be given as
{\begin{align}
d(R_1,R_2)=1-\frac{|E(R_1)\cap E(R_2)|}{ \sqrt{|E(R_1)||E(R_2)|}}
\end{align}}
\end{definition}
\begin{definition}
Let the $\epsilon$-covering number of $\Rcal(A)$, denoted by $N(A,\epsilon)$, be the smallest integer such that there
is a subset $\Rcal_{approx}(A,\epsilon)$ of $\Rcal$ such that
{\begin{align}
\sup_{R_{1}\in\Rcal(A)}\inf_{R_{2}\in \Rcal_{approx}(A,\epsilon)}d(R_{1},R_{2})\le \epsilon
\end{align}}
where $|\Rcal_{approx}(A,\epsilon)|=N(A,\epsilon)$.
\end{definition}
We can verify that all regions in $\Rcal(A)$
can be approximated by regions in $\Rcal_{approx}(A)$ with reasonably small error.
From the definitions, notice that the complexity of $\Rcal(A)$ is reflected by $N(A,\epsilon)$.
If $N(A,\epsilon)$ is nicely bounded (as is the case here), scan statistics can be calculated very efficiently 
(Lemma \ref{lm:entropy}). 

Before stating this result, we make a mild assumption on our graph. 
For any ball subgraph, the edges around its center are not too sparse, compared to the edges 
in the outer region of the ball subgraph, i.e., hard on the inside, soft on the outside. This yields, 
\begin{assumption}(Avocado) There exist constants $S$ and $H$ such that, for any $r/2 \leq r' \leq r$ and $v \in \mV$,
{\begin{align}\label{eq:lightskirt}
\frac{|E(B(v,r'))|}{|E(B(v,r))|} \geq H\left(1 - \frac{|E(B(v,r-r'))|}{|E(B(v,r))|}\right)^S.
\end{align}}
\end{assumption}
We see that this assumption holds for many classes of graphs:
a ring graph satisfies this condition when $H=1$ and $S=1$ and the 2-d lattice satisfies this condition when $H=1/4$ and $S=2$ (see Fig. \ref{fg:avocado}). 
\begin{figure}[]
	\begin{center}
		\scalebox{0.8}
		{
			\begin{tikzpicture}
			\tikzstyle{vertex}=[circle,draw,minimum size=0.2cm]
			\tikzstyle{every node}=[vertex]
			
			\node (A) { };
			\node (B) [below of = A] { };
			\node (C) [below of = B] { };
			\node (D) [below of = C] { };
			\draw (A) -- (B);
			\draw (B) -- (C);
			\draw (C) -- (D);

			\end{tikzpicture}
			\hspace{0.2cm}
			\begin{tikzpicture}
			\def \n {8}
			\def \radius {1.5cm}
			\def \margin {8} 
			
			\foreach \s in {1,...,\n}
			{
				\node[draw, circle] at ({360/\n * (\s - 1)}:\radius) { };
				\draw[-, >=latex] ({360/\n * (\s - 1)+\margin}:\radius) 
				arc ({360/\n * (\s - 1)+\margin}:{360/\n * (\s)-\margin}:\radius);
			}
			\end{tikzpicture}
			\hspace{0.2cm}
			\begin{tikzpicture}
			\tikzstyle{vertex}=[circle,draw,minimum size=0.2cm]
			\tikzstyle{every node}=[vertex]
			\foreach \x in {1,2,3,4}
			{
				\foreach \y in {1,2,3,4}
				{
					\node (\x \y) at (\x,\y) { };
					\ifnum\y>1
					\pgfmathparse{int(\y-1)}  
					\draw  (\x \y)  -- (\x \pgfmathresult);
					\fi
					\ifnum\x>1
					\pgfmathparse{int(\x-1)}                
					\draw (\x \y) -- (\pgfmathresult \y);
					\fi 
				}
			}   
			\end{tikzpicture}
			\hspace{1cm}
			\begin{tikzpicture}
			\def \n {8}
			\def \radius {1.5cm}
			
			\node[draw, circle] (ss) at ({0}:0) { };
			
			\foreach \s in {1,...,\n}
			{
				\node[draw, circle] (\s) at ({360/\n * (\s - 1)}:\radius) { };
				\path (ss) edge [] node { } (\s);
			}
			\path[draw,line width=0.5pt,red!70!white] (-1.5,1.5)--(1.5,-1.5);
			\path[draw,line width=0.5pt,red!70!white] (-1.5,-1.5)--(1.5,1.5);
			\end{tikzpicture}
		}
	\end{center}
	\caption{\label{fg:avocado} (Left) Chain, ring and 2D lattice graphs that satisfy the Avocado Assumption. (Right) Star graph that does not satisfy the property: from the center node the graph is ``too dense on the outside."}
\end{figure}
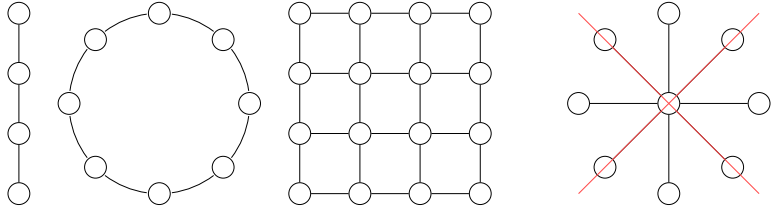
With this assumption, we have the following result for the $\epsilon$-covering number $N(A,\epsilon)$.
\begin{lemma}
	\label{lm:entropy}
        Let $|E|$ be the total number of edges in $G$. If (\ref{eq:lightskirt}) holds and $A$ is given, 
        then, for a constant $C_{H,S}$ which only depends on $H$ and $S$ in (\ref{eq:lightskirt}),
	{\begin{equation}
	    \label{eq:entropy}
	    N(A,\epsilon)\le C_{H,S}\frac{|E|}{ A}\left(\frac{1}{ \epsilon}\right)^{S+1}.
	\end{equation}}
\end{lemma}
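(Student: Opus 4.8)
The plan is to build the net $\Rcal_{approx}(A,\epsilon)$ as a product of a net in the \emph{radius} direction and a net in the \emph{center} direction, and to bound the two factors separately. For a fixed center $v$ and nested radii $r'\le r$ one has $E(B(v,r'))\subseteq E(B(v,r))$, so the intersection in the metric is exactly $|E(B(v,r'))|$ and $d(B(v,r'),B(v,r)) = 1-\sqrt{|E(B(v,r'))|/|E(B(v,r))|}$. Hence, scanning the radii whose balls fall in the size class $(A/2,A]$ in increasing order and greedily retaining a ball only when the running edge-count ratio to the last retained ball would otherwise drop below $(1-\epsilon)^2$, every omitted ball sits within $\epsilon$ of the last retained (smaller) one. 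Since the sizes span only a factor of two, the number of retained radii per center is at most $\log 2 / (-2\log(1-\epsilon)) = O(1/\epsilon)$. This produces the $1/\epsilon$ factor.

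The harder factor is the number of centers that must be kept. Here I would fix a spatial $\delta$-net $\Ncal_\delta\subseteq\mV$ in the graph metric, taken as a maximal $\delta$-packing (so it is automatically a $\delta$-cover), and approximate any $B(v',r)\in\Rcal(A)$ by a ball centered at the nearest net point $v$, with $d(v,v')\le\delta$. The sandwich $B(v,r-\delta)\subseteq B(v',r)\subseteq B(v,r+\delta)$ gives $|E(B(v,r-\delta))|\le |E(B(v',r))\cap E(B(v,r))|$ and $|E(B(v',r))|\le |E(B(v,r+\delta))|$, so that $d(B(v',r),B(v,r)) \le 1 - |E(B(v,r-\delta))|/\sqrt{|E(B(v,r))|\,|E(B(v,r+\delta))|}$. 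Choosing $\delta=\delta(v,r)$ as large as possible so that the two-sided shell around radius $r$ carries at most an $\epsilon$-fraction of the edges of $B(v,r)$ makes each such approximation $\epsilon$-accurate.

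It then remains to count $|\Ncal_\delta|$. Because net points are pairwise more than $\delta$ apart, the balls $\{B(v,\delta/2):v\in\Ncal_\delta\}$ are vertex-disjoint, hence edge-disjoint, so $|\Ncal_\delta|\cdot\min_{v}|E(B(v,\delta/2))|\le |E|$. The task is therefore to convert the thin-shell choice of $\delta$ into the quantitative lower bound $|E(B(v,\delta/2))|\ge c_{H,S}\,\epsilon^{S}A$, which yields $|\Ncal_\delta|\le |E|/(c_{H,S}\epsilon^S A)$; multiplying by the $O(1/\epsilon)$ retained radii gives exactly \eqref{eq:entropy}. This is where Assumption (Avocado) enters: \eqref{eq:lightskirt} forbids the ``soft inside, hard outside'' (star-like) profile and forces $r\mapsto |E(B(v,r))|$ to grow polynomially. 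I would apply \eqref{eq:lightskirt} across a dyadic sequence of radii $\delta,2\delta,4\delta,\ldots$ up to $r$; at each doubling, with $r'$ equal to half the radius, it pins the ratio $y_k=|E(B(v,2^{k}\delta))|/|E(B(v,2^{k+1}\delta))|$ below $1$ by a fixed margin (the relevant root of $y=H(1-y)^S$) depending only on $H,S$. Telescoping these $O(\log(r/\delta))$ factors turns the thin-shell relation $|E(B(v,r-\delta))|=(1-\epsilon)|E(B(v,r))|$ into the required power of $\epsilon$ in $|E(B(v,\delta))|$.

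The step I expect to be the main obstacle is exactly this last conversion. A single use of \eqref{eq:lightskirt} at radius $r$ only controls the shell up to the constant $1-H$ and is therefore too weak for small $\epsilon$, so the exponent in \eqref{eq:entropy} can only be recovered by chaining the assumption across scales; the delicate accounting is to verify that the accumulated volume-growth exponent together with the shell-to-radius relation combines to the stated power $S+1$, with everything else absorbed into $C_{H,S}$. Two secondary technical points I would need to handle are the degenerate regime $\delta>r/2$, where \eqref{eq:lightskirt} does not apply directly (there the ball is essentially the whole local neighborhood and can be charged to $C_{H,S}$), and checking that a single net spacing $\delta$ can be used uniformly across the factor-two band of radii comprising $\Rcal(A)$.
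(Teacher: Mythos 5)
Your high-level architecture is the same as the paper's: the paper decomposes $\Rcal(A)$ into $O(1/\epsilon)$ size bands $\Rcal_j(A)$ (your radius-pruning factor), packs centers within each band so that every selected ball can be charged at least $c_{H,S}\,\epsilon^S A$ edges, and finishes with a volume argument $M\le |E|/(c_{H,S}\epsilon^S A)$. However, two of your steps fail as stated, and they are exactly the two places where the real work happens. First, your spatial net is internally inconsistent: you choose the spacing $\delta=\delta(v,r)$ ball by ball (as large as possible subject to the thin-shell condition), but a maximal $\delta$-packing requires a single $\delta$. Taking $\delta=\min_v\delta(v,r)$ destroys the count, because the packing bound $|\Ncal_\delta|\le |E|/\min_v|E(B(v,\delta/2))|$ is only useful if \emph{every} net point satisfies $|E(B(v,\delta/2))|\ge c_{H,S}\epsilon^S A$, and you only know this at the particular $(v,r)$ that determined $\delta$; at other centers the ball of the global-minimum radius can carry far fewer edges. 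The fix — and what the paper actually does — is a greedy, Vitali-type selection with \emph{variable} radii: repeatedly pick the remaining ball whose width $d_{v,r}$ (the largest $d$ with $|E(B(v,r-d))|\ge(1-\epsilon/2)|E(B(v,r))|$) is maximal, delete all centers within $B(v_1,d_{v_1,r_1}/2)$, and recurse; the quarter-balls $B(v_i,d_{v_i,r_i}/4)$ are then pairwise disjoint despite the varying widths, and each absorbs $\ge D_{H,S}^2 HA\epsilon^S/2^{S+1}$ edges. You flagged uniformity of $\delta$ as a secondary technical point; it is in fact the load-bearing step.

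Second, your proposed dyadic chaining of \eqref{eq:lightskirt} cannot recover the exponent $S$ in $\epsilon$. Iterating the assumption at $r'=r/2$ only yields a doubling constant, $|E(B(v,r/2))|\ge D_{H,S}|E(B(v,r))|$ with $D_{H,S}$ the root of $y=H(1-y)^S$, so telescoping over $\log_2(r/\delta)$ scales gives $|E(B(v,\delta))|\ge (\delta/r)^{c_{H,S}}|E(B(v,r))|$ — a power of $\delta/r$, not of $\epsilon$ — and nothing in the Avocado assumption bounds $\delta/r$ from below in terms of the shell fraction $\epsilon$ (a graph may have an extremely light shell of large width). The correct use is single-shot, and your stated worry that one application ``only controls the shell up to the constant $1-H$'' is a misreading of the inequality's direction: applying \eqref{eq:lightskirt} with $r'$ equal to the \emph{small} radius $d$ gives $|E(B(v,d))|/|E(B(v,r))|\ge H\bigl(1-|E(B(v,r-d))|/|E(B(v,r))|\bigr)^S\ge H(\epsilon/2)^S$ just past the maximality of $d_{v,r}$, which is precisely the needed lower bound; two further applications of the doubling constant pass from radius $d$ to $d/4$. (Caveat: this takes $r'=d<r/2$, outside the range $r/2\le r'\le r$ stated in the assumption — the paper itself implicitly reads \eqref{eq:lightskirt} for all $0\le r'\le r$, and your chaining plan was, in effect, an unsuccessful attempt to stay inside the stated window.) With these two repairs your argument becomes the paper's proof; as written, both the center count and the per-ball edge lower bound are unproven.
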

The proof of this result follows from our ball-subgraph construction and our Avocado assumption and provided in the Appendix.

Intuitively, this result upper bounds the number of graphs that are necessary to search over to completely exhaust the search space of subgraphs. With this result, we can now construct a suitable size correction. 
Following the work of \cite{davies2001local} and \cite{wang2016structured}, we can increase the power of our test by using the following statistic:
{\begin{align} \label{eq:scanstat}
T^\ast=\max_{R\in\Rcal}\left(L_R-2\sqrt{\log\frac{|E|}{ |E(R)|}}\right).
\end{align}}
The significance of this size correction is that we now have \textit{a single critical value for each candidate subgraph, regardless of the subgraph size}. Our final test is defined as $\mathbb{I}[T^* > q_\alpha]$, where $q_\alpha$ is the $\alpha$-level quantile of $T^*$ under the null hypothesis (that no region is truly significant across groups). By construction, we can control the type 1 error at a specified $\alpha$-level.

Under the alternative hypothesis of this framework, it is important to note that in many cases,
large subgraphs that subsume smaller significant graphs may also have large test statistics, and our hypothesis test only indicates the existence of {\em some} significant region. To identify or localize the smaller subsets, we follow the procedure from \cite{jeng2010optimal}, by beginning with the subgraph with the largest test statistic and iteratively removing overlapping subsets from the total set of subgraphs. This requires testing each regional/local statistic, $(L_R - 2\sqrt{\log(|E|/|E(R)|)})$ against $q_\alpha$. Under this procedure, we can control the weak family-wise error rate (wFWER) if we view our problem via the lens of multiple testing. The weak FWER is the probability of false discovery under the null hypothesis. To see that this is inherently controlled, note
\begin{align}
\mathbb{P}(FN \geq 1|H_0) = \mathbb{P}(T^* > q_\alpha|H_0) \leq \alpha, 
\end{align}
where $FN$ is the number of false discoveries under the null hypothesis. With this correction at the group difference level, we completely avoid any multiple comparisons issues that would arise in the case of a test for each subgraph.
In addition to controlling the false positive rate, we have the following guarantee on {\em identifying truly significant regions} under the normal noise assumption.
\begin{theorem}
If \eqref{eq:lightskirt} holds and the number of edges in the candidate subgraph is larger than $\log^2 |E|$, i.e.,
\begin{equation}
\label{eq:setsize}
|E(R)|\gg \log^2 |E|\qquad \forall\ R\in\Rcal,
\end{equation}
then the critical value $q_\alpha$ satisfies
\begin{equation}
\label{eq:criticalvl}
q_\alpha=O(1).
\end{equation}
Moreover, as $|E|\to \infty$, if a subgraph $R_0$ obeys 
\begin{equation}
\label{eq:sigcond}
{(\Bbeta_1^{R_0}-\Bbeta_2^{R_0})^T\Sigma_{R_0}^{-1}(\Bbeta_1^{R_0}-\Bbeta_2^{R_0})\over \sqrt{|E({R_0})|}}\gg 2\sqrt{\log{|E|\over |E({R_0})|}},
\end{equation}
then as $|E|\to \infty$,
\begin{equation}
\label{eq:power}
\PP\left(L_{R_0}-2\sqrt{\log{|E|\over |E({R_0})|}}>q_\alpha\right)\to 1.
\end{equation}
\label{lm:mainthm}
\end{theorem}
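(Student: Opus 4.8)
The plan is to prove the two assertions of the theorem separately: first, that the size-corrected scan statistic $T^\ast$ has an $O(1)$ null quantile $q_\alpha$, and second, that under the signal condition \eqref{eq:sigcond} the region $R_0$ beats this threshold with probability tending to one. The crucial reduction underlying both parts is that each region statistic $L_R$ is a \emph{standardized} (central under $H_0$, non-central under $H_1$) $\chi^2_{|E(R)|}$ variable. Once $|E(R)|\gg \log^2|E|$, standard $\chi^2$ tail bounds show that $L_R$ concentrates like a sub-Gaussian variable with the appropriate mean and $O(1)$ fluctuations, and that the error of this Gaussian surrogate is uniformly negligible over $\Rcal$. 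This uniform accuracy is precisely what the edge lower bound \eqref{eq:setsize} buys us.

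For the critical value, I would bound the upper tail of $T^\ast$ under $H_0$ by a union over dyadic size classes $\Rcal(A)$ with $A=2^j$, $j=1,\dots,O(\log|E|)$. On each class the $L_R$ are mean-zero and the subtracted correction is essentially $2\sqrt{\log(|E|/A)}$. Invoking Lemma \ref{lm:entropy}, $\log N(A,\epsilon)\le C'\big(\log(|E|/A)+(S+1)\log(1/\epsilon)\big)$, a maximal inequality (discretize $\Rcal(A)$ to its $\epsilon$-net, control the net by a Gaussian union bound, and control the within-cell oscillation via the subgraph metric $d(R_1,R_2)$) gives
\begin{align*}
\max_{R\in\Rcal(A)}\Big(L_R-2\sqrt{\log\tfrac{|E|}{A}}\Big)\le C\sqrt{(S+1)\log(1/\epsilon)}+\eta(\epsilon),
\end{align*}
where $\eta(\epsilon)$ is the within-net oscillation. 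Choosing $\epsilon$ a small constant makes the right-hand side $O(1)$ uniformly in $A$; since the correction $2\sqrt{\log(|E|/A)}$ cancels exactly the $\sqrt{2\log(|E|/A)}$ growth of the naive maximum at each scale, summing the resulting geometrically decaying tail bounds over the $O(\log|E|)$ dyadic scales still yields an $O(1)$ bound on the tail of $T^\ast$, hence $q_\alpha=O(1)$.

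For the power claim, under the alternative the non-central $\chi^2_{|E(R_0)|}(\lambda)$ with $\lambda=(\Bbeta_1^{R_0}-\Bbeta_2^{R_0})^T\Sigma_{R_0}^{-1}(\Bbeta_1^{R_0}-\Bbeta_2^{R_0})$ has mean $|E(R_0)|+\lambda$, so after standardization $\EE[L_{R_0}]\approx \lambda/\sqrt{|E(R_0)|}$, which is exactly the left-hand side of \eqref{eq:sigcond}. That condition thus says $\EE[L_{R_0}]-2\sqrt{\log(|E|/|E(R_0)|)}\to\infty$, while Part 1 gives $q_\alpha=O(1)$. A one-sided concentration inequality for $L_{R_0}$ (again a $\chi^2$ tail bound, valid since $|E(R_0)|\gg\log^2|E|$) shows $L_{R_0}$ stays within $O(1)$ of its mean with probability $\to 1$, so $L_{R_0}-2\sqrt{\log(|E|/|E(R_0)|)}$ exceeds any fixed constant, in particular $q_\alpha$, with probability tending to one, establishing \eqref{eq:power}.

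The main obstacle is Part 1: turning the covering-number bound into a genuinely uniform (over all scales at once) $O(1)$ control of the corrected maximum. The delicate points are (i) verifying that the sub-Gaussian surrogate for the standardized $\chi^2$ is accurate \emph{uniformly} over $\Rcal$ — this is where \eqref{eq:setsize} is used, to keep the chi-square skewness from inflating the upper tail — and (ii) showing that the within-net oscillation of $L_R$ is controlled by the subgraph metric $d(R_1,R_2)$, so that a single $\epsilon$-net per scale suffices to discretize the process. Once these are in hand, the dyadic summation and the power argument are routine.
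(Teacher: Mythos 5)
Your overall architecture (dyadic size classes, covering numbers from Lemma~\ref{lm:entropy}, tail bounds for standardized $\chi^2$ statistics, and the mean/variance computation for the power claim) matches the paper's, but there is a genuine gap at the heart of Part~1: a single $\epsilon$-net per scale with \emph{constant} $\epsilon$ cannot deliver the $O(1)$ bound. The within-net oscillation $\eta(\epsilon)=\max_{R\in\Rcal(A)}|L_R-L_{\pi(R)}|$ is a maximum over polynomially many (up to $D|\mV|$) regions; each increment $L_R-L_{\pi(R)}$ is a centered weighted sum of $\chi^2_1-1$ variables whose standard deviation scales like $\sqrt{d(R,\pi(R))}\le\sqrt{\epsilon}$ (this is exactly the paper's Lemma~\ref{lm:diffbd}, a signed extension of \cite{laurent2000adaptive}), so the union bound over the class forces a deviation of order $\sqrt{\epsilon\log|E|}+\log|E|/A$. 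The second term is $O(1)$ by \eqref{eq:setsize}, but the first is $\Theta\bigl(\sqrt{\log|E|}\bigr)$ for constant $\epsilon$, which destroys $q_\alpha=O(1)$. Shrinking $\epsilon$ to order $1/\log|E|$ merely relocates the problem: the net entropy becomes $\log(|E|/A)+(S+1)\log\log|E|$, and at large scales $A\asymp|E|$, where $\log(|E|/A)=O(1)$, the threshold excess is then of order $\sqrt{\log\log|E|}$, again unbounded. No single choice of $\epsilon$ works uniformly over $A$, so your step ``choosing $\epsilon$ a small constant makes the right-hand side $O(1)$ uniformly in $A$'' fails.

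The paper's resolution, absent from your sketch, is a genuinely multi-scale chaining: nets $\Rcal_{approx}(A,e^{-l})$ at geometrically decreasing radii, from $l_\ast=\log\log(|E|/A)$ down to $l^\ast=2\log|E|$, with the telescoping decomposition $\max_{R}L_R\le \max_R|L_R-L_{\pi_{l^\ast}(R)}|+\sum_{l=l_\ast}^{l^\ast-1}\max_R|L_{\pi_{l+1}(R)}-L_{\pi_l(R)}|+\max_R L_{\pi_{l_\ast}(R)}$. At level $l$ the entropy cost $\log(|E|/A)+(S+1)(l+1)$ is paid at amplitude $e^{-l/2}$, so the oscillation sum converges to $O(1+x)$ and only the coarsest net carries the $2\sqrt{\log(|E|/A)}$ term, which the size correction cancels; condition \eqref{eq:setsize} is then used exactly as you guessed, to absorb the sub-exponential $t^2/\sqrt{A}$ and $t^2/A$ portions of the tails. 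A secondary, fixable slip in your power argument: when $\lambda:=(\Bbeta_1^{R_0}-\Bbeta_2^{R_0})^T\Sigma_{R_0}^{-1}(\Bbeta_1^{R_0}-\Bbeta_2^{R_0})\gg|E(R_0)|$, one has $\mathrm{Var}(L_{R_0})=2+4\lambda/|E(R_0)|\to\infty$, so $L_{R_0}$ does \emph{not} stay within $O(1)$ of its mean; the paper splits into the cases $\lambda\ge|E(R_0)|$ and $\lambda<|E(R_0)|$ and applies Chebyshev, using in the first case that the fluctuation $\sqrt{\lambda/|E(R_0)|}$ is still negligible relative to $\EE(L_{R_0})=\lambda/\sqrt{|E(R_0)|}$. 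That repair is routine; the missing chaining idea in Part~1 is not.
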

The full proof of this result follows a generic chaining argument (see, e.g. \cite{talagrand2006generic}) along with application of 
concentration inequalities and union bounds, and can be found in the Appendix.

{\em Summary.}
At a high level, this result directly characterizes the behavior of $T^*$ under the null hypothesis $H_0$ and 
the alternative hypothesis $H_1$, respectively. 
We see that \eqref{eq:criticalvl} implies that $T^*$ can roughly be seen as a constant under the null hypothesis, 
and under the alternative hypothesis when \eqref{eq:sigcond} is satisfied, the test based on $T^*$ is consistent, see \eqref{eq:power}. 

\subsection{Workflow for conducting hypothesis tests on temporal trends of graphs}
With these guarantees, our full workflow is as follows. First, we use an oracle procedure to generate a graph over our features 
that roughly captures the conditional independences. Any procedure that provides a conditional independence graph is sufficient. 
Next, for each ball subgraph over this graph, we compute the Longitudinal-Covariance GLM over these features for both groups, and compute the statistics outlined in \S\ref{sec:hyp-test}. We then compute the size-corrected statistic, and compare against the single critical value. For all regions that pass this threshold, we apply the procedure from \cite{jeng2010optimal}. This workflow shows how to conduct hypothesis tests on temporal trends of large covariance matrices, with improved 
power and bounded Type 1 error. Additional implementation details can be found in the Appendix.

\section{Localization Evaluation: Trends of Tobacco Usage Across Gender}
\label{sec:loceval}
We begin our empirical analysis of the model by first applying the subgraph localization procedure by itself (standalone), separate from our manifold regression scheme. In this case, our statistic is derived from {\em only} Generalized Linear Models (GLM) constructions, where the $\hat{\Bbeta}_g^R$ in equation \eqref{eq:LRstat} is the slope estimated from fitting standard first order linear models. Identifying the differentially varying subgraphs across groups in this way is similar to a simpler version of the planted clique identification problem \cite{arora2009computational}, where the clique we are trying to identify corresponds to those nodes whose slopes vary significantly across groups.
\begin{figure}[]
	\begin{center}
		\includegraphics[trim={0cm 0cm 0cm 0cm}, clip, width=0.9\textwidth]{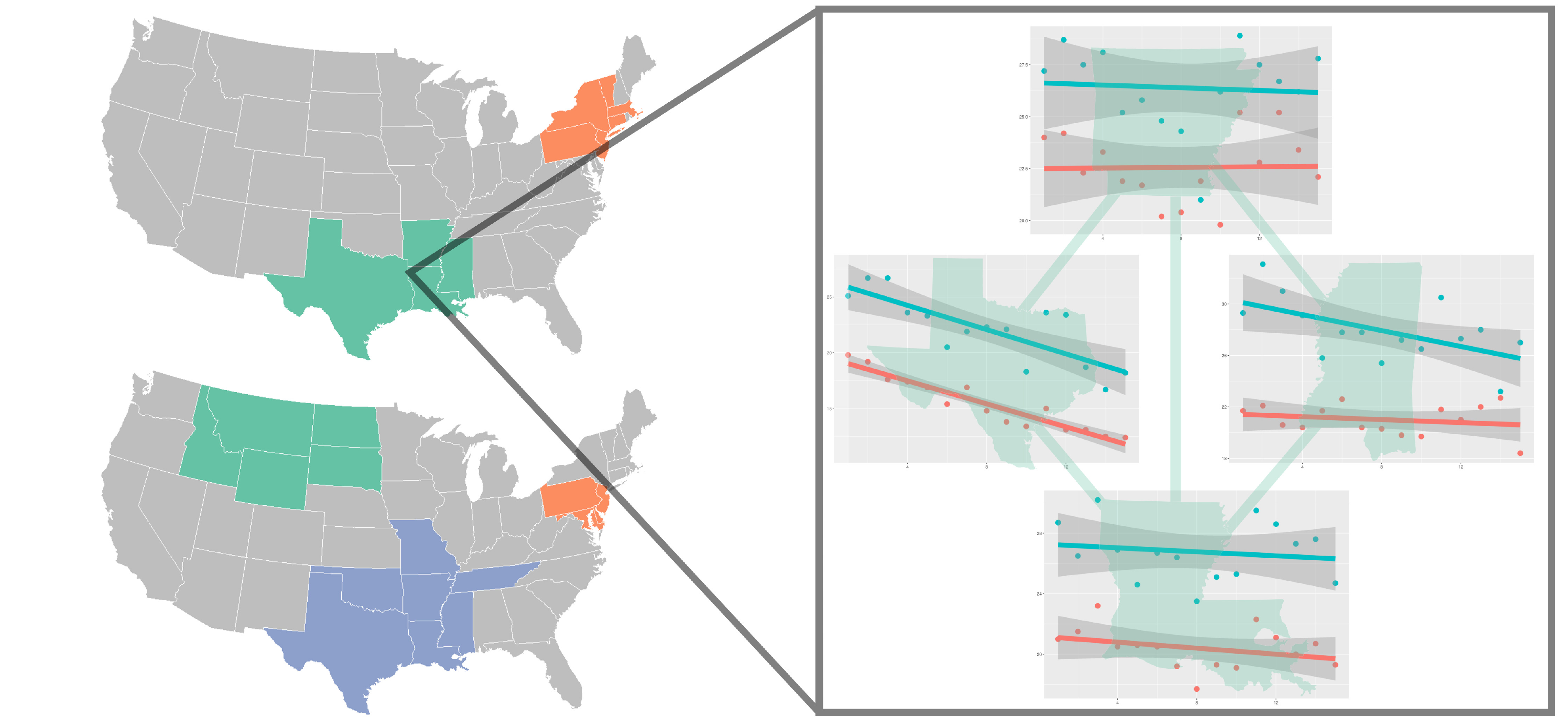}
	\end{center}
	\caption{(left,top) States identified as having significantly different time-varying tobacco usage across gender from 2001 to 2015. (left,bottom) States identified as having significantly different time-varying heavy drinking use across gender from 2010 to 2015. (right) Linear regressions over tobacco usage fitted to the four states defined by the ball subgraph centered at Louisiana. Best viewed in color.}
	\label{fig:tobalc}
\end{figure}

{\em Data.} The Center for Disease Control (CDC) provides extensive statistics regarding tobacco and alcohol usage across the US. This data has been collected systematically for the last few decades and is publically available (includes demographic information and gender). As a simple application of our 
proposed framework, 
we may pose the following question: which ``sub-groups'' of states tend to evolve differently in their correlation (pertaining to tobacco/alcohol usage) over time? 
Our framework extends easily to answer this question. In this setup, the oracle graph is simply the adjacency graph of the continental
US naturally which will be used directly in our scanning procedure.
For this dataset, we have direct observations of node measures: the percentage of males and females who 
reported smoking or drinking heavily in each state. Using 
gender as the group, 
we fit standard linear models for each candidate subgraph, and compute the difference of gender-wise 
slopes statistic as described above. 
In Figure \ref{fig:tobalc}, we see the regions identified using our method, and interpret some of the tobacco usage findings here.

In the northeast, we see that women have reduced their tobacco usage at a significantly faster rate than men compared to the rest of the country. 
We suspect that this may be at least partly tied to the development of women's cigarette brands in the late 1960s and 1970s 
followed by subsequent aggressive public policy campaigns in the 1990s and 2000s to highlight health risks beyond 
pulmonary or cardiovascular diseases for women (e.g., infertility, reduced bone-density in post-menopausal women). We also see that 
state-wide indoor smoking bans were put in place in the Northeast 
ahead of many other states in the union.  
In the South, the trends among men and women also seems to differ significantly. (see Fig. \ref{fig:tobalc}). 
Apart from health factors, the group-wise differences in the group-wise trends may also be explained by 
a few reasons identified in a study in 2007 \cite{HEC:HEC1223} which found that as the state sales tax on cigarettes changed (increased), 
women were significantly more price elastic than men. Between 2006 and 2008, the cigarette tax increased dramatically for all of the 4 states identified \textit{except for Louisiana}, whose tax rate has remained constant. Additionally, while Arkansas did increase their cigarette tax in 2009, they did \textit{not increase taxes in locations near borders shared with higher taxing states}. These intricate relationships among states lend credibility to the fact that our scan statistics framework is indeed identifying interesting sub-regions, and suggests that the full covariance-trajectory pipeline may be more appropriate if effects beyond the means are relevant within an analysis. 

%

\section{Pipeline Evaluation on Simulations and Baby Name Trends Over Time}
\label{sec:pipeval}
We next evaluate the ability of our entire analysis 
pipeline to identify group differences across temporally evolving \textit{covariance} trajectories. 
In many existing analyses, the effect of the mean differences may be stronger than the effect of the interaction matrix. 
However, in cases where the {\em mean signal is weak}, we expect that the covariance effect will be important. To evaluate 
our model in this regime, we perform a set of simulation studies and also analyze a publicly available longitudinal dataset.

{\bf Simulations.} We randomly generate SPD matrices from a `path' of 4 discrete points along the manifold, and use these data 
as population 
covariance matrices to generate 0-mean sample data. Table \ref{recover-table} shows the results of the hypothesis testing 
procedure with 50 features averaged over 100 runs, where both the true number of features with covariance trajectory differences, $p_t$, 
and the number of samples per group, $n$, were varied. 
As expected, our recovery rate increases nicely 
as a function of the number of samples $n$ and decreases as the size of region of change $p_t$ is increased when $n$ is held constant.

\begin{table}[]
	{
		\caption{Detection Accuracy of hypothesis test scheme (100 runs).}
		\begin{center}
			{\begin{tabular}{lllll}
					\toprule
					\toprule
					& $p_t = 5$& $p_t = 8$	& $p_t = 10$&  $p_t = 15$ 	\\
					\midrule
					$n=10$	&  0.06  &  0.02  &  0.04  &  0.03 \\
					$n=20$	&  0.75  &  0.75  &  0.53  &  0.29 \\
					$n=50$	&  0.99  &  1.00  &  1.00  &  0.80 \\
					$n=100$	&  1.00  &  1.00  &  1.00  &  0.95 \\
					$n=200$	&  1.00  &  1.00  &  1.00  &  0.98 \\
					$n=1000$&  1.00  &  1.00  &  1.00  &  1.00 \\
					\bottomrule
					\bottomrule
				\end{tabular}}
			\end{center}
		}
		\label{recover-table}
\end{table}
	
We compare our model to baseline methods that may be used in practice for the foregoing 
group difference hypothesis test. In standard applications, general linear models (GLMs) are often the first line of attack. 
When the covariates are assumed to be independent, a simple linear model as in \eqref{eq:euclideanhyptest} may be suitable. 
However, when the group difference is influenced by specific interactions between covariates, such linear models require additional care. 
A typical solution is to introduce pairwise interaction terms into the model -- a choice between 
all possible interactions or \textit{specific interactions specified by an expert}. The first model has 
problems since the number of samples $n \ll p^2$. In the second model, we depend completely on the user's choice of interactions, 
and must correct for multiple testing when testing different models, at least partly reducing the power of the final test.
\begin{figure}[!]
	\begin{center}
		\includegraphics[width=\textwidth]{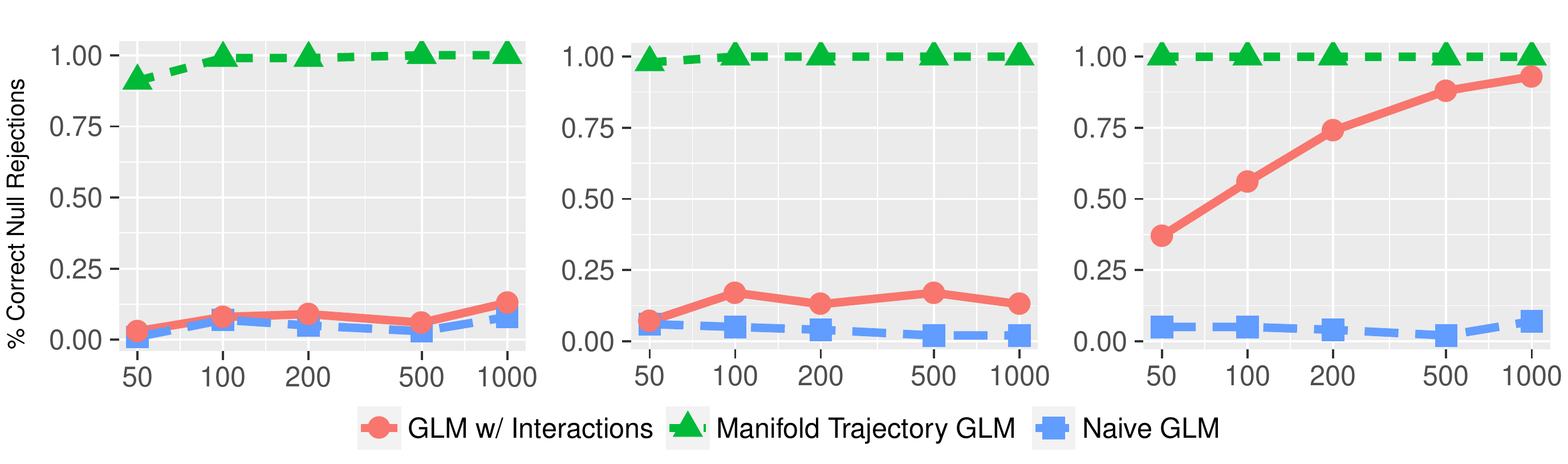}
		\caption{\label{fg:sim_graphs}{\footnotesize Correct null hypothesis rejections over 100 runs for three models. For $p = 50$ features, each plot shows the rejection rate
		for $p_t \in \{4, 8, 20\}$ (from left to right) respectively as a function of the number of sample points.} }
	\end{center}
\end{figure}
Figure \ref{fg:sim_graphs} shows the value of our method over these models. For the interaction GLM case, we randomly select interaction terms to include in the GLM, with size $p_t$ (the ground truth number of variables in the interaction). In this way, we approximate the effect of an oracle specifying to the GLM which terms may describe the underlying interaction. 
We report the fraction of significance tests where a significance threshold 
of $p \leq 0.05$ was found for each model, averaged over 100 runs. 
We see that our proposed scheme consistently achieves near-perfect results in terms of the percentage of null hypotheses that 
were correctly rejected (i.e., there was a significant group-difference signal). The power of scan statistics on 
graphs is particularly evident in the needle in haystack setting where the true differential signal is small ($p_t \leq 8$)
and the sample size is small to medium. When the sample size is large and $p_t$ is also large, the standard linear model with additional interaction terms starts to approach the statistical performance of our algorithm.

{\bf Longitudinal trends in Baby Names.} In addition to the simulations above, we report results from a simple analysis of 
how male/female baby names evolve over time over the last century. 
The United States Social Security Administration provides a publicly available dataset listing the frequency of the 
top 1000 baby names in each state for the last 106 years.
We evaluate our model in this context to examine which ``sub-group'' of states tend to evolve (or change) in their
``name agreement" (or correlation) over time between boy names and girl names.
Here, rather than calculating a sample covariance
at each timepoint, we calculate a rank correlation matrix instead. 
For example, if two neighboring Gulf Coast states, say Georgia and Alabama, substantially 
agreed on both boys and girls names in the period following the second World War, but gradually this agreement 
declined over time for girls (but not boys), 
we expect that our scan statistics on graphs hypothesis test 
will segment out this differential signal (in slope trends) from the planar graph induced by the states sharing 
a border.
Shown in Figure \ref{fig:usmap} are the regions identified using our method, applied on only the rank correlations for the top 10 names for both genders per state per year. Each highlighted region indicates a sub-group
where their ``trends of correlation (or agreement/disagreement)'' in preferred baby names over the last century 
varies between boys and girls. 
For states not identified by our model (in gray), we can conclude that
the state-to-state name preference-interactions may have still 
evolved over time but we have insufficient statistical 
evidence to conclude that such trends (slopes) are different between boys and girls. 

\begin{figure}[]
	\begin{center}
		\includegraphics[,trim={5cm 5cm 5cm 5cm}, clip, width=0.7\textwidth]{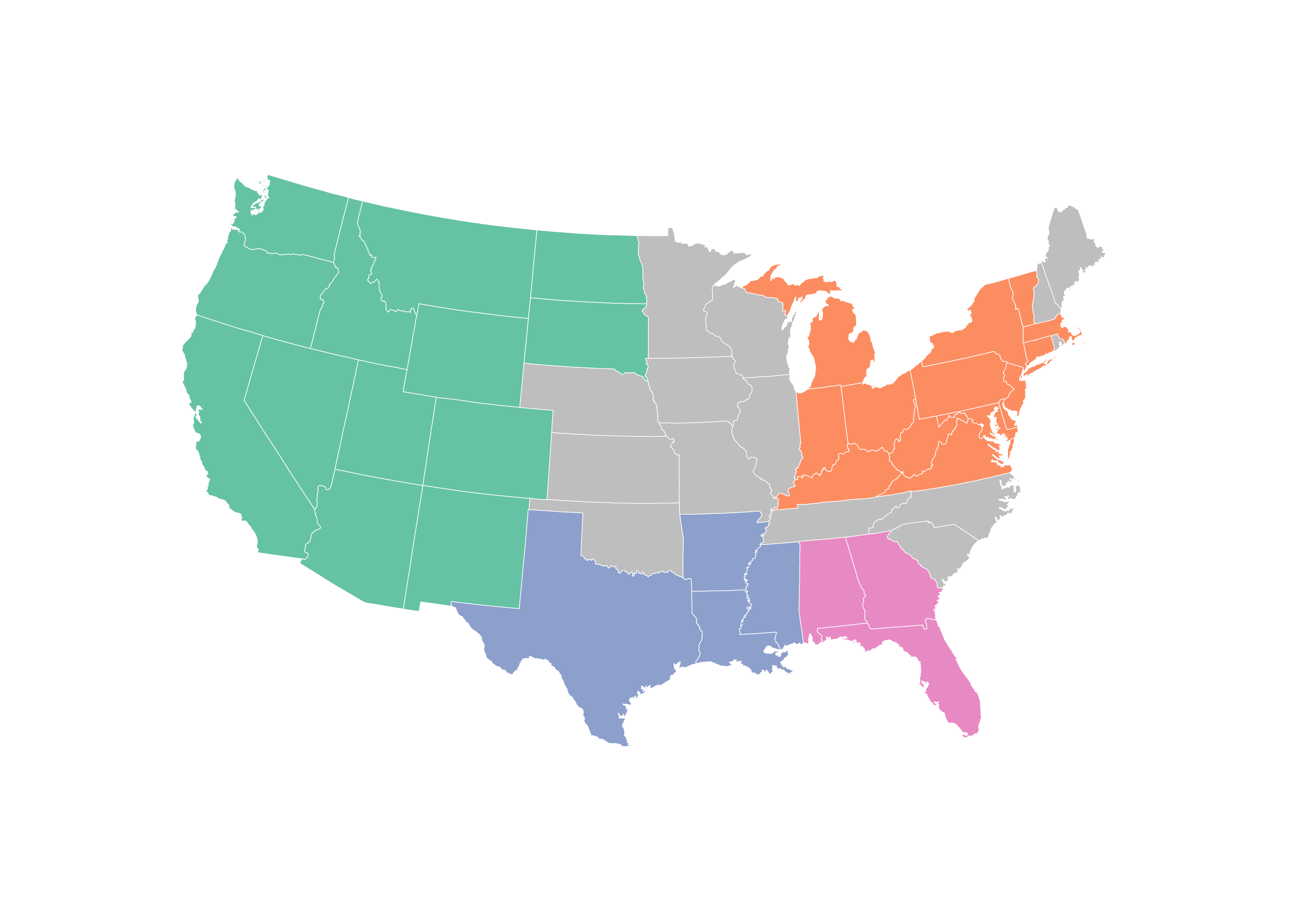}
		\caption{\label{fig:usmap} Contiguous states identified as having significantly different time-varying co-occurrences between boys and girls baby names from 1910 to 2015. Best viewed in color.}
	\end{center}
\end{figure}


\section{Identifying Differentially Covarying Features in Preclinical Alzheimer's Disease}
\label{sec:wrap}
We now describe experiments and results focused on the key motivation of this work --- to facilitate 
analysis of a longitudinal study of individuals at risk for Alzheimer's disease (AD) where the statistical 
signal is weak (with small to medium sample sizes). We describe the dataset details followed by the analysis and then interpret our conclusions 
in the context of scientific results that have been published in the literature in aging and dementia.

{\bf Study background.} We analyzed data from a 
cohort of individuals who have been longitudinally tracked for at least three visits over multiple years, as part of an ongoing study (since 2001)
to understand the disease processes in the brain {\em before} an individual exhibits signs of 
cognitive decline due to Alzheimer's Disease (AD) \cite{sager2005middle}. The study, Wisconsin Registry for 
Alzheimer's Prevention (WRAP)  
is among the largest of its kind in existence, focused on ``preclinical'' AD, i.e., when the 
individuals are still cognitively healthy, offering a window into the early disease processes 
where treatments, drugs and interventions are likely to be most effective. 
WRAP and its ancillary studies 
acquire neuroimaging data (MRI, PET with different tracers, diffusion MRI) and various clinical test scores, 
genetic and demographic data as well as clinical measures such as Cerebrospinal Fluid (CSF). Our analysis 
seeks to understand subtle group-wise differences in longitudinal patterns of dependencies between these measures at this early stage of the disease. 

{\bf Dataset.} The dataset consisted of 114 subjects with imaging data from at least two types of imaging modalities: Positron emission tomography and 
diffusion weighted Magnetic Resonance (MR) images. 
Positron emission tomography (PET) images were used to calculate, using well-validated pre-processing pipelines, 
the mean amyloid-plaque load (an important biomarker for AD) in 16 different anatomical regions of interest in the brain. 
Amyloid plaque is known to be an AD-related pathology and generally {\em precedes} onset of cognitive symptoms. 
Separately, diffusion tensor MR imaging (DTI) data were processed and used to calculate both Fractional Anisotropy (FA) and Mean Diffusivity (MD) in 48 distinct regions \cite{mori2008stereotaxic}. 
DTI images provide information about structural connectivity between gray matter regions in the brain. 
In addition to these $108$ ($48 \times 2 + 16$) image-derived features, 
we also included in the analysis the participant's scores on a battery of cognitive tests, known to be correlated with various neuropsychological functions \cite{lezak2004neuropsychological}. 
Differences were evaluated on various groupings of the subjects which were, for the most part, 
based on known results in the literature.   
Specifically, gender, APOE (Apolipoprotein E) genotype and amyloid positivity (based on thresholding the amyloid plaque summaries) have 
all been evaluated as significant in
AD studies \cite{racine2014associations} but often such analyses involve a population covering a broader disease spectrum where 
the signal is much stronger. 
%

{\bf Is analysis of second order statistics necessary?} In Figure \ref{fig:imagehists}, we present histograms detailing the distribution of two critical cognitive tests, stratified across various groups of scientific interest. Evaluating these distributions were the key motivation for our exploration into the methods described in the paper. Small differences in means across groups {\em regardless of grouping selection (i.e., stratification variable)}, and the saturation that occurs at the ceiling of cognitive test scores and other preliminary experiments conducted by us suggest that standard analyses are not sensitive enough to identify subtle higher-order differences.
\begin{figure}[]
	\centering
	\includegraphics[width=0.99\textwidth]{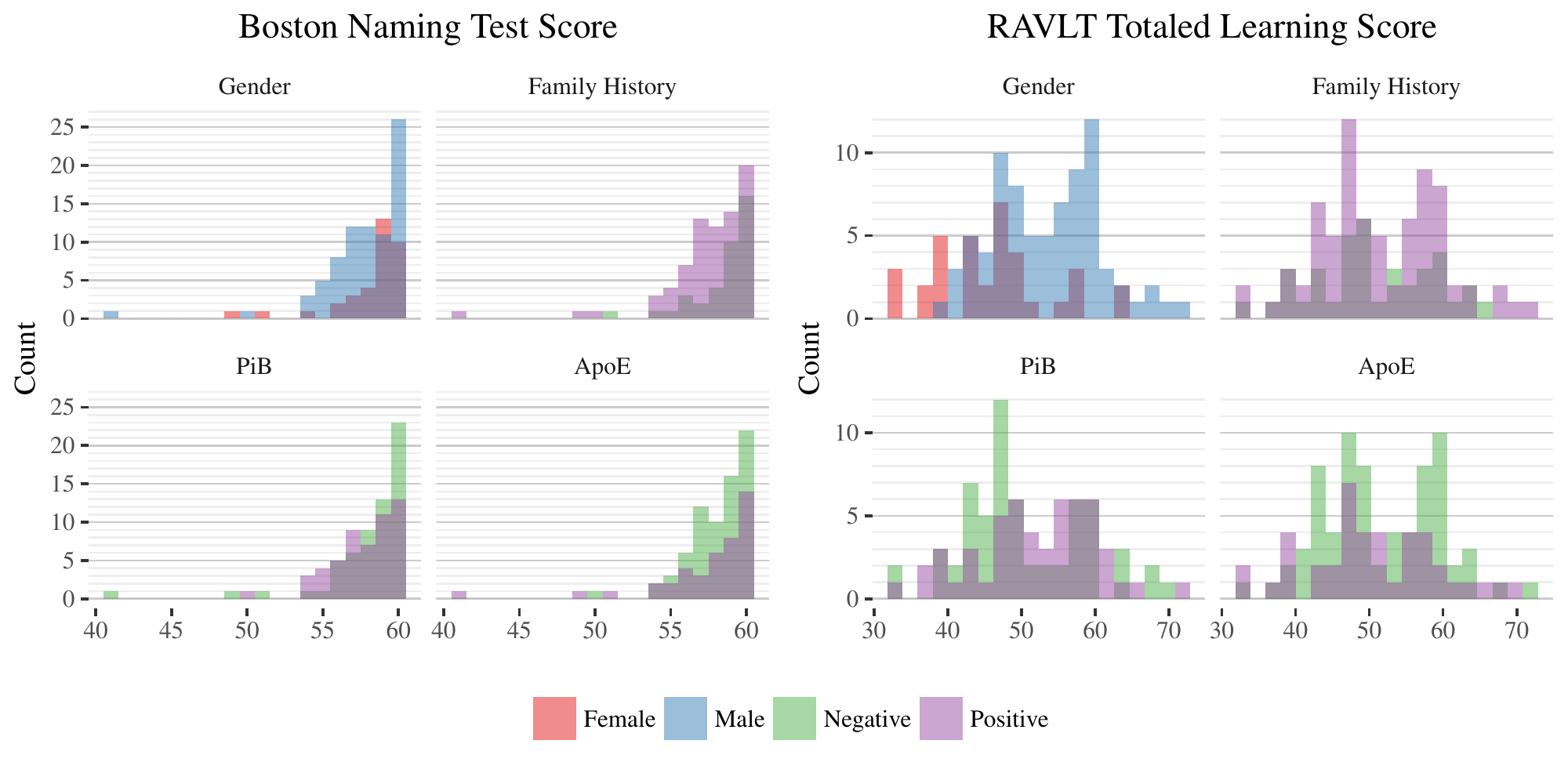}
	\caption{Histograms of the Boston Naming Test Scores and RAVLT Total Scores for all time points for the $114$ individual measurements across different group separations. The means for each test score is not significantly different across different stratification variable.}
	\label{fig:imagehists}
\end{figure}
\subsection{Results for Group difference analysis for individuals with imaging data}
We now describe, one by one, the components of the largest feature subset discovered for each stratification scheme 
and highlight the main scientific findings. In most cases, we provide a brief scientific interpretation of the results
for the interested reader. 
Additional details and results are available in the appendix.
%

{\bf A) Graph Scan Statistics on slope differences across gender.} 
The most significant (based on region-score) subset identified by the gender grouping was between the FA DTI measurement in the left cingulum gyrus 
as well as the scores on the Rey Auditory Verbal Learning Test (RAVLT). In recent AD research, gender has been identified as a factor in the progression 
of various pathology measures (e.g., incidence and prevalence of AD is higher in women \cite{fratiglioni1991prevalence,rimol2010sex}), and has contributed to a formal NIH notice (NOT-OD-15-102). However, we note that previous work in the field has \textit{not} identified gender-related 
differences when looking {\em only} at diffusion measures in the cingulum \cite{lin2014cingulum}. Our algorithm 
successfully identified longitudinal 
changes in {\em interaction} between these variables which supports the earlier results, and provides some evidence that as men and women age, 
their cognitive decline as measured by RAVLT manifests differently in relation to the cingulum gyrus.

\begin{table}
	\scriptsize
	\centering
	\begin{tabular}[t]{ll}
		\toprule
		\multicolumn{2}{c}{\textbf{Gender}}\\ \midrule \midrule
		
		Set 1     & RAVLT Total (1-5) \\ & FA Cingulum  L	\\
		\midrule
		Set 2 & FA Medial lemniscus L	\\ & FA Cingulum (hippocampus) L		\\
		& FA Post thalamic radiation L \\ \midrule
		Set 3    & FA Corticospinal tract R \\ & FA Superior O.F. fasciculus  R \\	\midrule\bottomrule
	\end{tabular}
	\hfill
	\begin{tabular}[t]{ll}
		\toprule
		\multicolumn{2}{c}{\textbf{Genotype: APOE4}}\\ \midrule \midrule
		Digit Span Backward Raw Score & Stroop Color-word Score \\ 
		PiB Cingulum Post L &    PiB Cingulum Post R         \\
		PiB Frontal Med Orb L & 		PiB Frontal Med Orb R \\ 
		PiB Precuneus L & PiB Precuneus R \\ 
		PiB SupraMarginal &  PiB Temporal Mid R \\ \midrule		 \bottomrule
	\end{tabular}
	\caption{Group difference across Gender (left) and Genotype APOE4 expression (right). Three disjoint sets of features were identified as coavarying significantly differently among gender, while one larger set was identified in the genotype stratification.}
	\label{tab:wrapIMG}
\end{table}

{\bf B) Graph Scan Statistics on slope differences across genotype.} 
Next, we stratified the cohort
 based on the genotype known to be most closely linked with AD, i.e., the APOE (Apolipoprotein E) gene \cite{corder1993gene} --- 
we inherit one APOE allele from each parent; having one or two copies of the e4 allele increases a person's risk of getting AD 
whereas the rarer e2 allele is associated with a lower risk of AD. 
Using this stratification, we obtain a low-risk and an at-risk group of individuals. 
Here, we identified amyloid-load regions within the medial and lateral parietal lobes 
and find that in the ``low-risk" group, the covariances between Digit Span and Stroop Color-Word scores 
(attention and concentration scores) and amyloid load moves from strongly negative towards $0$ as a function of age (Table \ref{tab:wrapIMG}). 
In the ``at-risk'' group 
(APOE4), however, we find that as a function of age, the features become more and more positively correlated. 
Existing studies have shown that the accumulation 
of amyloid is significantly different across APOE4 gene expression \cite{mormino2014amyloid}, and our results provide some evidence 
that the expression of the genotype may interact with cognitive scores as well, {\em even at this early stage of the disease}, 
when the individuals in our cohort are 
cognitively healthy. The sets of features showing a differential signal are presented in Table \ref{tab:wrapIMG}.
%

\begin{table}[]
	\small
	\centering
	\begin{tabular}{lll}
		\toprule
		\multicolumn{3}{c}{\textbf{Amyloid Load (PiB Positivity)}}\\ \midrule \midrule
		Set 1 & PiB Angular L/R & PiB Cingulum Ant L/R \\
		& PiB Cingulum Post L/R & PiB Frontal Med Orb L/R \\
		& PiB Precuneus L/R & PiB Temporal Sup L/R \\
		& PiB Temporal Mid L/R & \textbf{PiB SupraMarginal L} \\
		\midrule
		Set 2     & FA Cerebral peduncle R   & FA Cerebral peduncle L	\\
		& MD Corticospinal tract R	& MD Corticospinal tract L		\\
		& Trail-Making Test Part A Score  & MD Cerebral peduncle R \\ 
		&PET Cingulum Post R  &  \\ \midrule\bottomrule
	\end{tabular}
	\caption{Group difference across Amyloid Load (PiB Positivity)}
	\label{tab:wrapPIB}
\end{table}

{\bf C) Graph Scan Statistics on slope differences across amyloid load positivity.} 
As briefly described above, amyloid load is an important biomarker for AD. For our analysis, amyloid (or PiB) positivity 
is calculated using the mean amyloid PiB measures across all brain regions using a PiB PET image scan of the participant. 
When we used this measure for stratification (threshold was set at $1.18$, following \cite{darst2017pathway}), 
our model identified fifteen of the sixteen PiB regions that were input to the model when the density of the oracle graph was set to be high. 
This result is as expected, but interestingly we find that controlling for the linear combination of the features (through centering), 
the residual error \textit{still} has significant signal with the PiB positivity measure, indicating that amyloid burden \textit{interactions} 
across brain regions plays a very important role in AD progression \cite{hardy2002amyloid,hardy1992alzheimer,tanzi2005twenty,jack2010brain}. When the sparsity of the oracle graph was increased, however, four neighboring regions, the left and right corticospinal tract and the left and right cerebral peduncle were identified on both PiB and DTI measures (supported by the literature \cite{douaud2011dti}), together with Part A of the Trail Making Test (see Table \ref{tab:wrapPIB}) which 
happens to be used in AD diagnosis \cite{albert2011diagnosis}. 
This suggests that changes in atrophy within these regions, as measured by DTI, co-occur with changes in amyloid burden. Additionally, because these regions are highly correlated with rough and fine motor ability \cite{naidich2009duvernoy}, it seems plausible that amyloid positivity will lead to higher `covariation' in the regions associated with 
a measure of fine motor speed, i.e., the Trail Making Test.

\subsection{Results for for Group difference analysis for individuals with Cognitive Testing data}

In addition to the dataset presented above, we apply our method to a much larger dataset consisting of approximately 1500 individuals with only cognitive testing data collected in a longitudinal manner. Each individual was administered these tests for between two and three time-points, yielding approximately $n = 4000$ samples for our model. For each assessment, a conference of experts applied a diagnostic label indicating normal cognition or mild cognitive impairment. Using this binary classification, we can stratify our population for group difference analysis.
We find that among many different significant subsets, the covariance trajectory among the scores on both parts of the Trail-Making Test and 
on all trials of the RAVLT test explain a significant group difference. These have previously been shown to be the {\em most sensitive tests} for 
early cognitive decline \cite{albert2001preclinical}. 
Table \ref{tab:wrapCC} displays the other tests identified by our algorithm, and additional experiments on this larger cohort 
can be found in the appendix. 

\begin{table}
	\small
	\centering
	\begin{tabular}{ll}
		\toprule
		\multicolumn{2}{c}{\textbf{Expert Consensus Diagnosis}}\\ \midrule \midrule
		WAIS-3 LNS Raw Score &
		Boston Naming Test Total Score \\
		RAVLT A2 Raw Score &
		RAVLT A3 Raw Score \\
		RAVLT A4 Raw Score &
		RAVLT A5 Raw Score \\
		RAVLT A6 Raw Score &
		RAVLT Delayed Recall Raw Score \\
		Trail-Making Test Part A &
		Trail-Making Test Part B \\
		Clock Drawing Test Score &
		CES Depression Scale Score \\
		\bottomrule
		\bottomrule
	\end{tabular}
	\caption{Group difference localization across expert clinical diagnosis. With significantly more samples and a larger set of cognitive tests, those above were identified as significantly different across the expert consensus measure.}
	\label{tab:wrapCC}
\end{table}

\subsection{Baseline.}

In various experiments on this dataset, when the MMGLM procedure is performed for the entire feature set in totality ({\em not} 
utilizing any of the proposed ideas based on scan statistics), 
and the null distribution derived using permutation testing, the procedure {\em yields no significance across \textit{any} 
scientifically interesting group stratifications}. 
This implies that the ability to search over different blocks of the covariance matrix is critical in identifying meaningful group differences 
in the trajectories, unavailable 
using alternate schemes. For instance, simpler strategies work well enough for datasets such as ADNI -- 
  which includes diseased subjects as well as controls -- 
  where the signal is stronger and even temporal modeling may be unnecessary.
  While the scientific results need to be interpreted with caution and reproducibility experiments on 
other similar datasets (both within the US and internationally) are in the planning phase, 
we believe that the ability to localize 
differences in these interaction patterns in a statistically rigorous manner is valuable and these findings can be investigated standalone, via 
more classical schemes (e.g., structural equation modeling).

\section{Conclusions}
The analysis of datasets to identify where clinically disparate groups differ is pervasive in biology, neuroscience, genomics and epidemiological studies. 
We find that graphical models are an ideal tool to analyze high-dimensional data in these areas but have been sparingly used for the analysis of 
group-wise differences, especially in a longitudinal setting. 
Motivated by an application related to longitudinal analysis of imaging and clinical/cognitive data from otherwise healthy individuals 
who are at risk for Alzheimer's disease (AD), we show how a combination of manifold regression with a generalization of scan statistics to the graph setting yields 
tools that can be directly deployed. 
We present an efficient algorithm and develop the theoretical results showing the regimes where its application is appropriate. 
In various experiments, while the standard schemes are not sufficiently powered to detect the signal, our proposed formulation is able to 
detect meaningful group difference patterns, many of which have a clear scientific interpretation. 
We believe that these results are promising for the neuroimaging application 
described and other regimes where group-wise analysis is desired but the number of features is large.


\acks{ This research was supported in part by NIH grants R01 AG040396, AG021155, EB022883 
and NSF grants DMS 1265202 and CAREER award 1252725. The authors were also supported by 
the \href{http://cpcp.wisc.edu/}{UW Center for Predictive Computational Phenotyping} (via BD2K award AI117924) and the 
\href{http://www.adrc.wisc.edu/}{Wisconsin Alzheimer's Disease Research Center} (AG033514). 
Mehta was supported by a fellowship via training grant award T32LM012413. 
}

\newpage
\appendix
\section{Technical Proofs.}
\subsection{Proof of Lemma \ref{lm:entropy}}
To remind the reader, this result was necessary in order to allow us to reduce the number of subgraphs (regions) that need to be evaluated over the graph. By bounding the covering number we have a guarantee that we do not need to consider an exponential number of subgraphs in order to find a localization.

%
%

\begin{proof}
	To upper bound $N(A,\epsilon)$, we first construct the $\epsilon$-covering set of $\Rcal(A)$ under metric $d$.
	To this end, we decompose $\Rcal(A)$ into several disjoint sets 
	$$
	\Rcal_j(A)=\left\{B(v,r)\in \Rcal(A): \left(1-{(j+1)\epsilon \over 2}\right)A<|E(B(v,r))|\le \left(1-{j\epsilon \over 2}\right)A\right\} ,
	$$
	for $j=0,1,\ldots, \lceil{1 \over \epsilon}\rceil$. Our strategy is to construct $\epsilon$-covering set for each set $\Rcal_j(A)$. 
	
	We only construct $\epsilon$-covering set for $\Rcal_0(A)$; $\Rcal_j(A)$ ($j\ge 1$) can be treated similarly.
	To construct the $\epsilon$-covering set for $\Rcal_0(A)$, we denote by $d_{v,r}$ the largest positive number such that
	\begin{equation}
	\label{eq:d}
	{|E(B(v,r-d_{v,r}))| \over |E(B(v,r))|}\ge 1-{\epsilon\over 2},
	\end{equation}
	for every $v\in V$ and $r\in \NN$. Let $\Dcal_{1}$ the collection of $d_{v,r}$ such that $B(v,r)\in \Rcal_0(A)$, i.e.
	$$
	\Dcal_{1}=\{d_{v,r}:B(v,r)\in \Rcal_0(A)\},
	$$
	and $\Vcal_{1}$ the collection of nodes such that $B(v,r)\in \Rcal_0(A)$, i.e. 
	$$
	\Vcal_1=\{v:B(v,r)\in \Rcal_0(A)\}.
	$$
	We pick up the largest number in $\Dcal_1$, denoted by $d_{v_1,r_1}$, i.e. $
	d_{v_1,r_1}\ge d_{v,r}\ \forall\ d_{v,r}\in \Dcal_1$
	and define $\tilde{\Vcal}_{1}$  as
	$$
	\tilde{\Vcal}_{1}=\{v\in \Vcal_1:v\in B(v_1,d_{v_1,r_1}/2)\}.
	$$
	After defining $\tilde{\Vcal}_{1}$, $\Dcal_{2}$ and $\Vcal_2$ can be defined as
	$$
	\Dcal_{2}=\Dcal_{1}\setminus \{d_{v,r}:v\in \tilde{\Vcal}_{1}\}\qquad{\rm and}\qquad \Vcal_{2}=\Vcal_{1}\setminus  \tilde{\Vcal}_{1}.
	$$
	Then we can pick up the largest number in $\Dcal_{2}$, denote by $d_{v_2,r_2}$ and $\tilde{\Vcal}_{2}$ can be defined similarly.
	We can repeat the above process until $\Dcal_M$ and $\Vcal_M$ are empty for some $M$.We actually obtain a partition of $\Vcal_1$, 
	$$
	\bigcup_{i=1}^{M}\tilde{\Vcal}_i=\Vcal_1\qquad{\rm and}\qquad \tilde{\Vcal}_{i_1}\cap \tilde{\Vcal}_{i_2}=\emptyset \qquad 1\le i_1< i_2\le M.
	$$
	Based on $d_{v_1,r_1},\ldots, d_{v_M,r_M}$, we are ready to prove the set
	$$
	\Rcal_0(A,\epsilon)=\{B(v_i,r_i):1\le i\le M\}
	$$
	is actually an $\epsilon$-covering set for $\Rcal_0(A)$. To this end, it is equivalent to show that for arbitrary $B(v',r')\in \Rcal_0(A)$, we have
	\begin{equation}
	\label{eq:dist}
	d(B(v',r'),B(v_i,r_i))\le \epsilon
	\end{equation}
	when $v'\in \tilde{\Vcal}_i$. To show (\ref{eq:dist}), we consider two cases where $r'>r_i-d_{v_i,r_i}/2$ and $r'\le r_i-d_{v_i,r_i}/2$.
	When $r'>r_i-d_{v_i,r_i}/2$, then 
	$$
	B(v_i,r_i-d_{v_i,r_i})\subset B(v',r').
	$$
	Combining above result, (\ref{eq:d}), and the definition of $\Rcal_0(A)$ yields
	\begin{align*}
	&{|E(B(v',r'))\cap E(B(v_i,r_i)) |\over\sqrt{|E(B(v',r'))||E(B(v_i,r_i))|}}\\
	\ge& {|E(B(v_i,r_i-d_{v_i,r_i})) |\over\sqrt{|E(B(v',r'))||E(B(v_i,r_i))|}} \\
	\ge& \sqrt{1-{\epsilon\over 2}}{|E(B(v_i,r_i-d_{r_i}))| \over |E(B(v',r'))|}\\
	\ge& 1-\epsilon.
	\end{align*}
	On the other hand, if $r'\le r_i-d_{v_i,r_i}/2$, then 
	\begin{align}
	B(v',r')\subset B(v_i,r_i).
	\end{align}
	By definition of $\Rcal_0(A)$, we can get
	$$
	{|E(B(v',r'))\cap E(B(v_i,r_i)) |\over\sqrt{|E(B(v',r'))||E(B(v_i,r_i))|}}\ge \sqrt{|E(B(v',r'))| \over |E(B(v_i,r_i))|}\ge  1-\epsilon.
	$$
	Therefore, (\ref{eq:dist}) is proved and $\Rcal_0(A,\epsilon)$ is an  $\epsilon$-covering set for $\Rcal_0(A)$.
	
	The rest of the proof is to bound the cardinality of $\Rcal_0(A,\epsilon)$, i.e. $M$.
	Note that (\ref{eq:lightskirt}) implies there exists some constant $D_{H,S}$ only depending on $H$ and $S$ such that, for any $v\in V$ and $r\in \NN$,
	$$
	|E(B(v,r/2))|\ge D_{H,S} |E(B(v,r))|.
	$$
	By the definition of $d_{v_i, r_i}$, we can ensure $B(v_i,d_{v_i,r_i}/4)$ are disjoint.
	Hence, this implies
	$$
	|E(\tilde{\Vcal}_i) |\ge |E(B(v_i,d_{v_i,r_i}/4))|\ge D_{H,S}^2|E(B(v_i,d_{v_i,r_i}))|\ge D_{H,S}^2 HA\epsilon^S/2^{S+1}.
	$$
	The last inequality is suggested by (\ref{eq:lightskirt}) and (\ref{eq:d}). The volume argument yields
	$$
	M\le {|E|\over D_{H,S}^2 HA\epsilon^S/2^{S+1}}\le {2^{S+1}\over D_{H,S}^2H}{|E|\over A}\left({1\over \epsilon}\right)^S
	$$
	(\ref{eq:entropy}) is obtained upon application of the above to each $\Rcal_j(A)$.
\end{proof}

\subsection{Proof of Theorem \ref{lm:mainthm}}
	
Before we are ready to prove Theorem \ref{lm:mainthm}, we need the following result:

\begin{lemma}
	\label{lm:diffbd}
	Let $Y_1,\ldots,Y_d$ be i.i.d. standard Gaussian variable, i.e. $N(0,1)$ and $a_1,\ldots,a_d$ be a sequence of numbers.
	If 
	\begin{equation}
	Z=\sum_{i=1}^d a_i(Y_i^2-1),
	\end{equation}
	then
	\begin{equation}
	\PP(|Z|\ge 2|a|_2\sqrt{x}+2|a|_\infty x)\le 2\exp(-x)
	\end{equation}
	where $|a|_2=\sqrt{\sum_{i=1}^d a_i^2}$ and $|a|_\infty=\max_{i=1,\ldots,d}|a_i|$.
\end{lemma}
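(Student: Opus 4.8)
The plan is to prove this Bernstein-type bound by the Cram\'er--Chernoff method applied to the exact moment generating function of $Z$, recognizing $Z$ as a sum of independent centered $\chi^2_1$ variables (a Laurent--Massart style argument). First I would record the cumulant generating function (CGF). Since $Y_i^2$ is $\chi^2_1$ with $\EE e^{t Y_i^2}=(1-2t)^{-1/2}$ for $t<1/2$, each summand has CGF $\log \EE e^{t a_i (Y_i^2-1)}=-a_i t-\tfrac12\log(1-2a_i t)$, valid whenever $2a_it<1$; by independence $\psi(t):=\log\EE e^{tZ}=\sum_{i=1}^d\bigl(-a_i t-\tfrac12\log(1-2a_i t)\bigr)$ for $0\le t<1/(2|a|_\infty)$.

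Next I would pass to a sub-gamma bound on $\psi$. Using the elementary inequality $-v-\tfrac12\log(1-2v)\le v^2/(1-2|v|)$ for $|v|<1/2$ (verified separately for $v\ge0$ and $v<0$ from the power series of the logarithm), applied with $v=a_it$ so that $|v|=|a_i|t\le |a|_\infty t$, gives $\psi(t)\le \sum_i (a_it)^2/(1-2|a|_\infty t)=|a|_2^2 t^2/(1-2|a|_\infty t)$. This is exactly the sub-gamma form $\psi(t)\le \nu t^2/\bigl(2(1-ct)\bigr)$ with variance factor $\nu=2|a|_2^2$ and scale $c=2|a|_\infty$, exhibiting $Z$ as sub-gamma with parameters $(\nu,c)$.

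Finally I would invert via the standard sub-gamma deviation bound: if $\psi(t)\le \nu t^2/\bigl(2(1-ct)\bigr)$ on $(0,1/c)$, then $\PP(Z\ge \sqrt{2\nu x}+cx)\le e^{-x}$, obtained by optimizing $-ts+\psi(t)$ over $t$ (equivalently through the Legendre transform $\tfrac{\nu}{c^2}h(cs/\nu)$ with $h(u)=1+u-\sqrt{1+2u}$). Substituting the parameters yields $\sqrt{2\nu x}=2|a|_2\sqrt{x}$ and $cx=2|a|_\infty x$, i.e. the one-sided bound $\PP(Z\ge 2|a|_2\sqrt{x}+2|a|_\infty x)\le e^{-x}$. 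Applying the same three steps to $-Z=\sum_i(-a_i)(Y_i^2-1)$, whose $|a|_2$ and $|a|_\infty$ are unchanged, controls the lower tail, and a union bound over the two events gives the stated two-sided inequality with the factor $2$.

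I expect the main obstacle to be the Chernoff inversion in the last step: landing precisely on the clean constants $2|a|_2\sqrt{x}+2|a|_\infty x$ requires the exact Legendre transform of the sub-gamma CGF, since the naive choice $t=s/(cs+\nu)$ is slightly too weak to reproduce these constants. Some care with the optimization, or a direct appeal to the Laurent--Massart lemma, is therefore needed; the sign bookkeeping in the CGF inequality of the second step is the only other point demanding attention.
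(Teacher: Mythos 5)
Your proposal is correct and follows essentially the same route as the paper's proof: both bound the cumulant generating function of each summand by $-v-\tfrac12\log(1-2v)\le v^2/(1-2|v|)$, sum to get $\log\EE e^{tZ}\le |a|_2^2t^2/(1-2|a|_\infty t)$, and then invert via the Laurent--Massart/sub-gamma Chernoff argument, handling the lower tail by applying the same steps to $-Z$. The only difference is cosmetic---you carry out the Legendre-transform inversion explicitly where the paper defers to the arguments of Laurent and Massart.
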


\begin{proof}
	This is a direct extension of lemma 1 in \cite{laurent2000adaptive} to the negative case. We follow arguments similar to theirs.
	Let $\phi(x)$ be the  the logarithm of the Laplace transform of $Y_i^2-1$. For any $-1/2<x<1/2$,
	$$
	\phi(x)=\log\left(\EE\left(\exp(x(Y_i^2-1))\right)\right)=-x-{1\over 2}\log(1-2x)\le {x^2\over 1-2|x|}.
	$$
	This leads to
	\begin{align*}
	\log(\EE(e^{xZ}))&=\sum_{i=1}^d \log\left(\EE\left(\exp(a_ix(Y_i^2-1))\right)\right)\\
	&\le \sum_{i=1}^d {a_i^2x^2\over 1-2|a_i|x}\\
	&\le {|a|_2^2x^2\over 1-2|a|_\infty x}
	\end{align*}
	With the same arguments in \cite{laurent2000adaptive}, we could prove that 
	$$
	\PP\left(Z\ge 2|a|_\infty x+2|a|_2\sqrt{x}\right)\le \exp(-x).
	$$
	The other direction can be proved if we apply the same argument for $-Z$.
\end{proof}

With this in hand we proceed to prove Theorem \ref{lm:mainthm}.


\begin{proof}
In the following proof, $C$ always refers to some constant, although its value may change from place to place. First, we prove (\ref{eq:criticalvl}). To this end, we prove concentration inequalities for $L_R$ for some $R$ and $L_{R_1}-L_{R_2}$ for some $R_1\ne R_2$.
Since we assume the noise follows normal distribution, we have 
$$
(\hat{\Bbeta}_1^R-\hat{\Bbeta}_2^R)^T\Sigma_R^{-1}(\hat{\Bbeta}_1^R-\hat{\Bbeta}_2^R)={\sum X_i^2-(\sum X_i)^2\over 2}\|\hat{\Bbeta}_1^R-\hat{\Bbeta}_2^R\|^2\sim \chi^2_{|E(R)|}.
$$
By tail bound for $\chi^2$ random variables (see e.g. \cite{laurent2000adaptive}), we can yield
\begin{equation}
\label{eq:single}
\PP\left(L_R>2t+{2t^2\over \sqrt{|E(R)|}}\right)\le \exp(-t^2).
\end{equation}
By definition, $L_{R_1}-L_{R_2}$ can be written as
$$
L_{R_1}-L_{R_2}={\sum_{i\in R_1\setminus R_2}Z_i\over \sqrt{|E(R_1)|}}+ \left({1 \over \sqrt{|E(R_1)|}}-{1 \over \sqrt{|E(R_2)|}}\right)\sum_{i\in R_1\cap R_2}Z_i -{\sum_{i\in R_2\setminus R_1}Z_i\over \sqrt{|E(R_2)|}}
$$
where $Z_i$ are independent random variable following distribution $\chi_1^2-1$.
Lemma~\ref{lm:diffbd} implies
\begin{equation}
\label{eq:diff}
\PP\left(|L_{R_1}-L_{R_2}|>2\sqrt{2d(R_1,R_2)}t+{2t^2\over \min(|E(R_1)|,|E(R_2)|)}\right)\le 2\exp(-t^2).
\end{equation}

We now proceed to prove (\ref{eq:criticalvl}) by applying a chaining argument (See \cite{talagrand2006generic}) and concentration inequalities (\ref{eq:single}) and (\ref{eq:diff}).
Recall $\Rcal_{app}(A,\epsilon)$ is the smallest $\epsilon$-covering set of $\Rcal(A)$ and $N(A,\epsilon)$ is the covering number of $\Rcal(A)$.
For any subgraph candidate $R$, we denote by
$$
\pi_l(R)={\arg\min}_{R'\in \Rcal_{app}(A,e^{-l})} d(R,R').
$$
For any $l^\ast>l_\ast$, which will be specified later, we write $\max_{R\in\Rcal(A)}L_R$ into three parts
$$
\max_{R\in\Rcal(A)}L_R\le \max_{R\in\Rcal(A)}|L_R-L_{\pi_{l^\ast}(R)}|+\sum_{l=l_\ast}^{l^\ast-1}\max_{R\in\Rcal(A)}|L_{\pi_{l+1}(R)}-L_{\pi_{l}(R)}|+\max_{R\in\Rcal(A)}L_{\pi_{l_\ast}(R)}.
$$
Now, we bound these three terms above separately.
\begin{enumerate}
\item[] \textbf{Term 1}. Let $l^\ast=2\log |E|$. By concentration inequality (\ref{eq:diff}) and union bound, we have 
\begin{align*}
& \PP\left(\max_{R\in\Rcal(A)}|L_R-L_{\pi_{l^\ast}(R)}|>{2\sqrt{2(x+\log|E|)}\over |E|}+{4x+8\log |E|\over A}\right)\\
\le & |\Rcal(A)| \PP\left(|L_R-L_{\pi_{l^\ast}(R)}|>{2\sqrt{2(x+\log|E|)}\over |E|}+{4x+8\log |E|\over A}\right)\\
\le & 2{|\Rcal(A)|\over |E|^2}\exp(-x)\le 2\exp(-x) 
\end{align*}
for $x<\log |E|$. Therefore, we have
$$
\PP\left(\max_{R\in\Rcal(A)}|L_R-L_{\pi_{l^\ast}(R)}|>{C(x+\log |E|)\over A}\right)\le \exp(-x),
$$
for $x<\log |E|$.
\item[] \textbf{Term 2}. Let $l_\ast=\log\log (|E|/A)$. 
Recall that the Avocado assumption (\ref{eq:lightskirt}) suggests that
\begin{equation}
\label{eq:cvnum}
N(A,\epsilon)\le C_{H,S}{|E|\over A}\left(1\over \epsilon\right)^{S+1}.
\end{equation}
Applying concentration inequality (\ref{eq:single}) along with
\begin{equation}
t=\sqrt{\log\left(|E|\over A\right)+(S+1)\log\log\left(|E|\over A\right)+x+C}
\end{equation}
and the union bound, we have 
\begin{align*}
&\PP\left(\max_{R\in\Rcal(A)}L_{\pi_{l_\ast}(R)}>2t+{2t^2\over \sqrt{A}}\right)\\
\le & N\left(A,{1\over \log (|E|/A)}\right) \PP\left(L_{\pi_{l_\ast}(R)}>2t+{2t^2\over \sqrt{A}}\right)\\
\le & C_{H,S}{|E|\over A}\left(\log {|E|\over A}\right)^{S+1} \PP\left(L_{\pi_{l_\ast}(R)}>2t+{2t^2\over \sqrt{A}}\right)\\
\le & \exp(-x) 
\end{align*}
for $x<\log |E|$. Here we also apply condition (\ref{eq:setsize}). Therefore, we obtain
$$
\PP\left(\max_{R\in\Rcal(A)}L_{\pi_{l_\ast}(R)}>2\sqrt{\log\left(|E|\over A\right)+(S+1)\log\log\left(|E|\over A\right)+x}+C\right)\le \exp(-x) 
$$
for $x<\log |E|$.
\item[] \textbf{Term 3}. For any given $l$, application of concentration inequality (\ref{eq:diff}), covering number condition (\ref{eq:cvnum}), and the union bound yields,
\begin{align*}
&\PP\left(\max_{R\in\Rcal(A)}|L_{\pi_{l+1}(R)}-L_{\pi_{l}(R)}|>\sqrt{C(\log(|E|/A)+l+x)\over e^l}+{C(\log(|E|/A)+l+x) \over A }\right) \\
\le & C_{H,S}{|E|\over A}e^{(l+1)(S+1)}\PP\left(|L_{\pi_{l+1}(R)}-L_{\pi_{l}(R)}|>\sqrt{C(\log(|E|/A)+l+x)\over e^l}+{C(\log(|E|/A)+l+x) \over A }\right)\\
 \le & {\exp(-x)\over l^2}.
\end{align*}
for any $x<\log |E|$. 
With another standard application of the union bound, we have
\begin{align*}
&\PP\left(\sum_{l=l_\ast}^{l^\ast-1}\max_{R\in\Rcal(A)}|L_{\pi_{l+1}(R)}-L_{\pi_{l}(R)}|>\sqrt{C(\log(|E|/A)+x) \over \log(|E|/A)}+{\log^2|E|+x\log|E|\over A}\right)  \\ 
\le & \sum_{l=l_\ast}^{l^\ast-1} \PP\left(\max_{R\in\Rcal(A)}|L_{\pi_{l+1}(R)}-L_{\pi_{l}(R)}|>\sqrt{C(\log(|E|/A)+l+x)\over e^l}+{C(\log(|E|/A)+l+x) \over A }\right)\\
\le & \sum_{l=l_\ast}^{l^\ast-1} {\exp(-x)\over l^2}\\
\le & 2\exp(-x).
\end{align*}
\end{enumerate}
Putting the three terms above together yields
$$
\PP\left(\max_{R\in\Rcal(A)}L_R>2\sqrt{\log\left(|E|\over A\right)}+C(x+1)\right)\le {4\over \log(e|E|/A)}\exp(-x),
$$
where we apply $A\gg \log^2|E|$ and the inequalities $\sqrt{a+b}\le \sqrt{a}+\sqrt{b}$ and $\sqrt{a+b}\le \sqrt{a}+b/\sqrt{a}$.

Now, we apply this bound to $A=|E|2^{-k}$, $k\ge 0$ yielding 
$$
\PP\left(\max_{R\in\Rcal}\left(L_R-2\sqrt{\log{|E|\over |E(R)|}}\right)>C(x+1)\right)\le 8\exp(-x).
$$
This immediately suggests that $q_\alpha=O(1)$.

Now, let's turn to the case when a subgraph is significant, that is to prove (\ref{eq:power}). Assume the significant region is $R_0$. Using standard statistics we calculate the mean and variance of $L_{R_0}$
$$
\EE(L_{R_0})={(\Bbeta_1^{R_0}-\Bbeta_2^{R_0})^T\Sigma_{R_0}^{-1}(\Bbeta_1^{R_0}-\Bbeta_2^{R_0})\over \sqrt{|E({R_0})|}} \ \ \text{and} \ \  Var(L_{R_0})=2+4{(\Bbeta_1^{R_0}-\Bbeta_2^{R_0})^T\Sigma_{R_0}^{-1}(\Bbeta_1^{R_0}-\Bbeta_2^{R_0})\over |E({R_0})|}.
$$
By Chebyshev's inequality, we have
\begin{equation}
\label{eq:alche}
\PP\left({|L_{R_0}-\EE(L_{R_0})|\over \sqrt{Var(L_{R_0})}}>x\right)\le {1\over x^2}.
\end{equation}
If $(\Bbeta_1^{R_0}-\Bbeta_2^{R_0})^T\Sigma_{R_0}^{-1}(\Bbeta_1^{R_0}-\Bbeta_2^{R_0})\ge |E({R_0})|$, then (\ref{eq:alche}) suggests 
$$
\PP(L_{R_0}>\sqrt{|E(R_0)|})\to 1,\  |E|\to \infty
$$ by taking $x$ as a sequence (e.g., $\log\log(|E(R_0)|)$) which increases slow enough in (\ref{eq:alche}). This leads to (\ref{eq:power}).
If $(\Bbeta_1^{R_0}-\Bbeta_2^{R_0})^T\Sigma_{R_0}^{-1}(\Bbeta_1^{R_0}-\Bbeta_2^{R_0})<|E({R_0})|$, then $Var(L_{R_0})<6$. Then (\ref{eq:sigcond}) and (\ref{eq:alche}) imply
$$
\PP\left(L_{R_0}-2\sqrt{|E|\over |E(R_0)|}>q_\alpha\right)\to 1, \qquad{\rm as}\  |E|\to \infty.
$$
\end{proof}

\section{Implementation Details.}
The workflow below describes one run of our model given a sparsity is specified for the oracle graph procedure.
\begin{enumerate}
\item \textbf{Oracle Graph.} As noted in the main paper, we use \textit{graphical lasso (glasso)} to generate an \textit{oracle graph}, which allows to define structured regions (subgraphs) for scan statistics on graphs. 
Each element of the input matrix $C$ in \eqref{eq:glasso} for glasso is generated by  calculating the slope for each position of the covariance matrix across the predictors for each group, and then taking the difference between the groups.
The following inverse covariance estimation problem, \textit{glasso}, is then solved using existing MATLAB interfaces to fast C implementations.
\begin{equation}
\Theta = \arg\min_{\Theta \succeq 0} \quad -\log|\Theta| + tr(C\Theta) + \lambda||\Theta||_{1}
\label{eq:glasso}
\end{equation}

With sparsity parameter $\lambda$, this procedure generates a reasonably sparse \textit{oracle graph}.

\item \textbf{Candidate Subgraphs.} With the oracle graph in hand, we then construct the set of all ball subgraphs, as defined in Section \ref{sec:loc} of our main paper. By limiting ourselves to only a few ($D|V|$) subgraphs, we can perform scan statistics more efficiently.

\item \textbf{Characterizing the Null Distribution.} In the case where we have few samples, we cannot directly apply the $\chi^2$ result. In these cases, the null distribution is then characterized using permutation testing over all candidate subgraphs. For each subgraph the input data is permuted a number of times to generate a good representation of the distribution at that subgraph. All normalized (but not size-corrected) scan statistics are then calculated for all permutations across all subsets and then combined in order to create the null distribution.

\item \textbf{Calculating the Test Statistic} For a specific subset of the data, the scan statistic is calculated and corrected as described in Section \ref{sec:loc} of the main paper, over the original grouping of the data. For each group, the logitudinal-covariance GLM \eqref{eq:lcglm} is computed using the procedures in \S\ref{sec:effest}.
%
\item \textbf{Region Identification.} We first identify all subsets whose statistic falls above the $\alpha$-level threshold specified. Then the subset-collection procedure outlined in the main paper, developed by \cite{jeng2010optimal}, is applied, and the non-overlapping critical regions are output.
\end{enumerate}

\subsection*{Numerical Considerations} 
In practice, our empirical covariance matrices calculated on the sample data may not be positive definite. The matrix can be rank deficient when we do not have enough linearly independent samples. 
In addition, we may use a rank correlation matrix in its place, which also may not be PD.
To resolve this issue, we \textit{project} the empirical covariance matrix onto the symmetric-positive definite $\SPD(n)$ manifold. We first apply a standard procedure for transforming a 
symmetric matrix into a symmetric positive semidefinite (SPSD) one. As described in \cite{wu2005analysis}, the standard eigenvalue thresholding, or clipping, $\lambda_{SPSD} = \max(0,\lambda)$ is sensible 
since it provides the optimal projection of any matrix onto the SPSD manifold. 
Let $\Sigma = U\Lambda U^\top$ be the eigenvalue decomposition of the matrix $\Sigma$. The SPSD projection of $\Sigma$ is 
then $\text{proj}_{SPSD}(\Sigma) = U\text{diag}(\max(\lambda_1,0),\ldots,\max(\lambda_n,0))U^\top$. And so to project to the $\SPD(n)$ manifold we can simply add some epsilon to each element of the diagonal: 
\begin{align}
\text{proj}_{SPD}(\Sigma) = U\text{diag}(\max(\lambda_1,0),\ldots,\max(\lambda_n,0))U^\top + \epsilon I
\end{align}
A remark on the term $\epsilon I$ will be useful here. We find that in experiments, numerical problems can arise if the smallest eigenvalue of the projected matrix is too small. 
By iteratively adding a 
small $\epsilon$ until the smallest eigenvalue is above our threshold, we ensure that the matrix is positive definite for the exponential and logarithmic maps. They are necessary for moving back and forth between the manifold and the tangent space.
\subsection*{A note on localization accuracy}
In addition to simply checking whether or not we were able to correctly answer the hypothesis test group difference, it is important that if a significance is found, that it is found in the features that were originally used to generate the data. Using the same simulation setup as previous, we take the union of all subsets returned to be significant and check if each of the truly changing features $p_t$ are contained within the superset.

In this particular case we find that our localization is only dependent on the graphical lasso procedure we use to generate the oracle graph. As long as the sparsity specified is large enough to include at least $p_t$ edges, we find that in \textit{every} simulation where we find a significant difference, the features that express the difference are a superset of the true features.
\section{Preclinical AD Extended Details and Results.}
\subsection*{Data and Variable Descriptions}
In our neuroimaging experiments, a large number of our features describe specific and localized regions of the brain across multiple imaging modalities. Below we list and describe each of regions for each modality, and give a brief background on each of methods used to acquire the data. We also include the list of cognitive scores used in our analysis.

\subsubsection*{PET Imaging}
Positron emission tomography has become an increasingly popular method of imaging the brain, specifically in the areas where cognitive decline can be strongly correlated with the specific matter being imaged. Pittsburgh compound B (PiB) was used as the tracer for these images, and the 16 mirrored (Left and Right) regions labeled below were selected as strongly correlated with the development and progression of Alzheimer's Disease.
{\small
\begin{enumerate}
\item PiB Angular L/R
\item PiB Cingulum Ant L/R
\item PiB Cingulum Post L/R
\item PiB Frontal Med Orb L/R
\item PiB Precuneus L/R
\item PiB SupraMarginal L/R
\item PiB Temporal Mid L/R
\item PiB Temporal Sup L/R
\end{enumerate}
}

The average of the voxel values in each ROI (region of interest) of the brain are used for imaging features.
The 16 regions are highlighted in Figure \ref{fig:pibrois}.

\begin{figure*}[h]
\centering
  \includegraphics[width=0.8\textwidth]{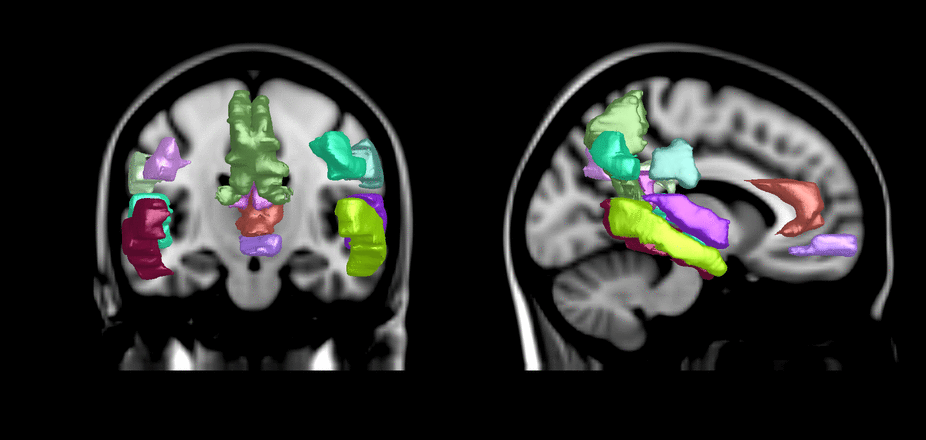}
  \caption{\label{fig:pibrois}16 Positron Emission Tomography (PET) regions.}
\end{figure*}

\newpage

\subsubsection*{DTI Imaging}
Diffusion tensor imaging is used to measure the restricted diffusion of water through and about regions of the brain. The 48 regions here are the aggregated measurements of total rates of diffusion for each voxel in that region. The two measurements, Fractional Anisotropy (FA) and Mean Diffusivity (MD) collectively well describe the diffusion in a specific region. The following is the full list of regions used in our analysis. Regions that spanned across both the left and right sides of the brain are indicated as such, and were treated as separate and independent in our analyses. 
{\small
\begin{multicols}{2}
\begin{enumerate}
\item Middle cerebellar peduncle
\item Pontine crossing tract (a part of MCP)
\item Genu of corpus callosum
\item Body of corpus callosum
\item Splenium of corpus callosum
\item Fornix (column and body of fornix)
\item Corticospinal tract R/L
\item Medial lemniscus R/L
\item Inferior cerebellar peduncle R/L
\item Superior cerebellar peduncle R/L
\item Cerebral peduncle R/L
\item Anterior limb of internal capsule R/L
\item Posterior limb of internal capsule R/L
\item Retrolenticular part of internal capsule R/L
\item Anterior corona radiata R/L
\item Superior corona radiata R/L
\item Posterior corona radiata R/L
\item Posterior thalamic radiation (include optic radiation) R/L
\item Sagittal stratum (include inferior longitidinal fasciculus and inferior fronto-occipital fasciculus) R/L
\item External capsule R/L
\item Cingulum (cingulate gyrus) R/L
\item Cingulum (hippocampus) R/L
\item Fornix (cres) / Stria terminalis (can not be resolved with current resolution) R/L
\item Superior longitudinal fasciculus R/L
\item Superior fronto-occipital fasciculus (could be a part of anterior internal capsule) R/L
\item Uncinate fasciculus R/L
\item Tapetum R/L
\end{enumerate}
\end{multicols}
}
\begin{figure*}
\centering
  \includegraphics[width=0.32\textwidth]{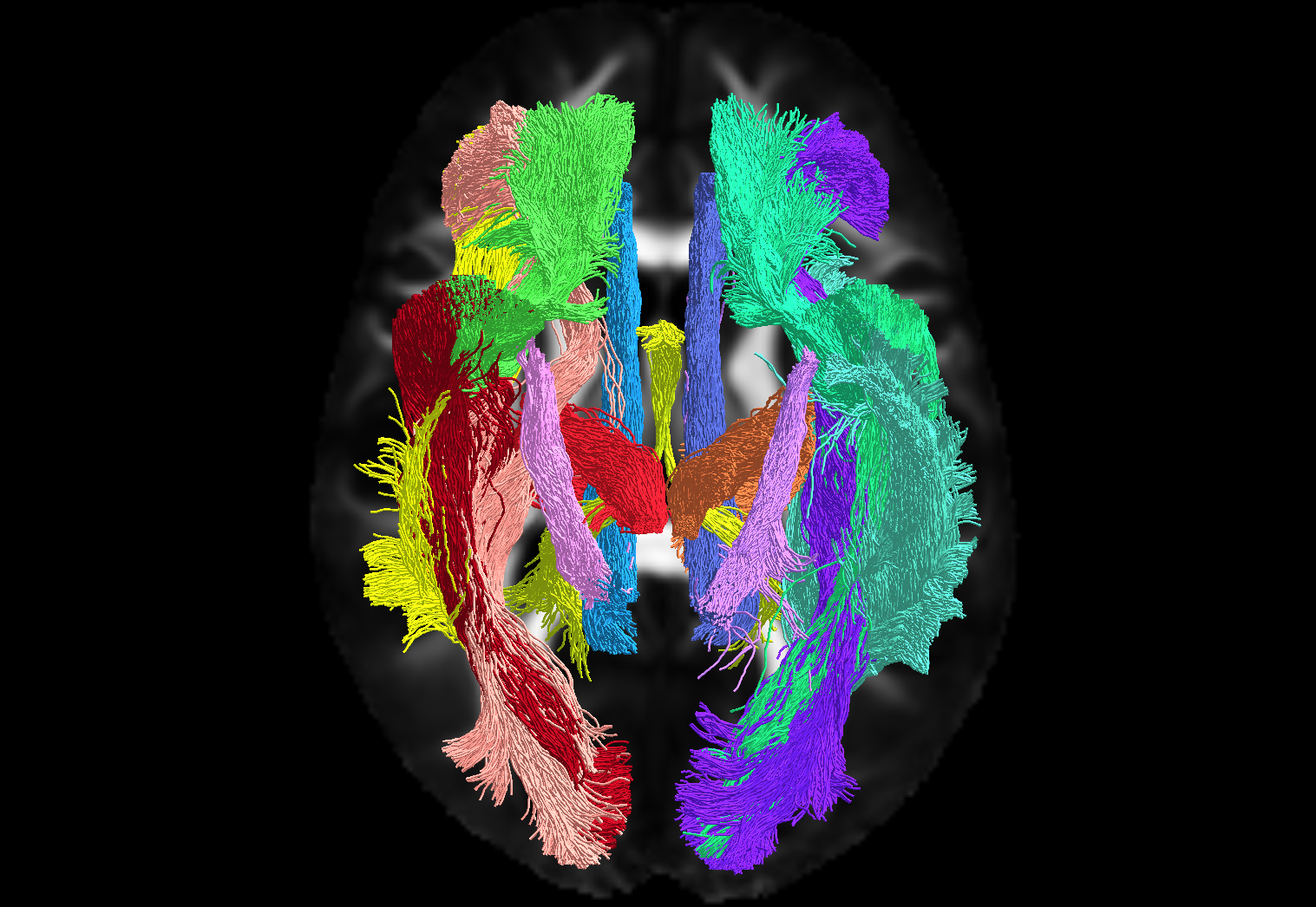}
  \includegraphics[width=0.32\textwidth]{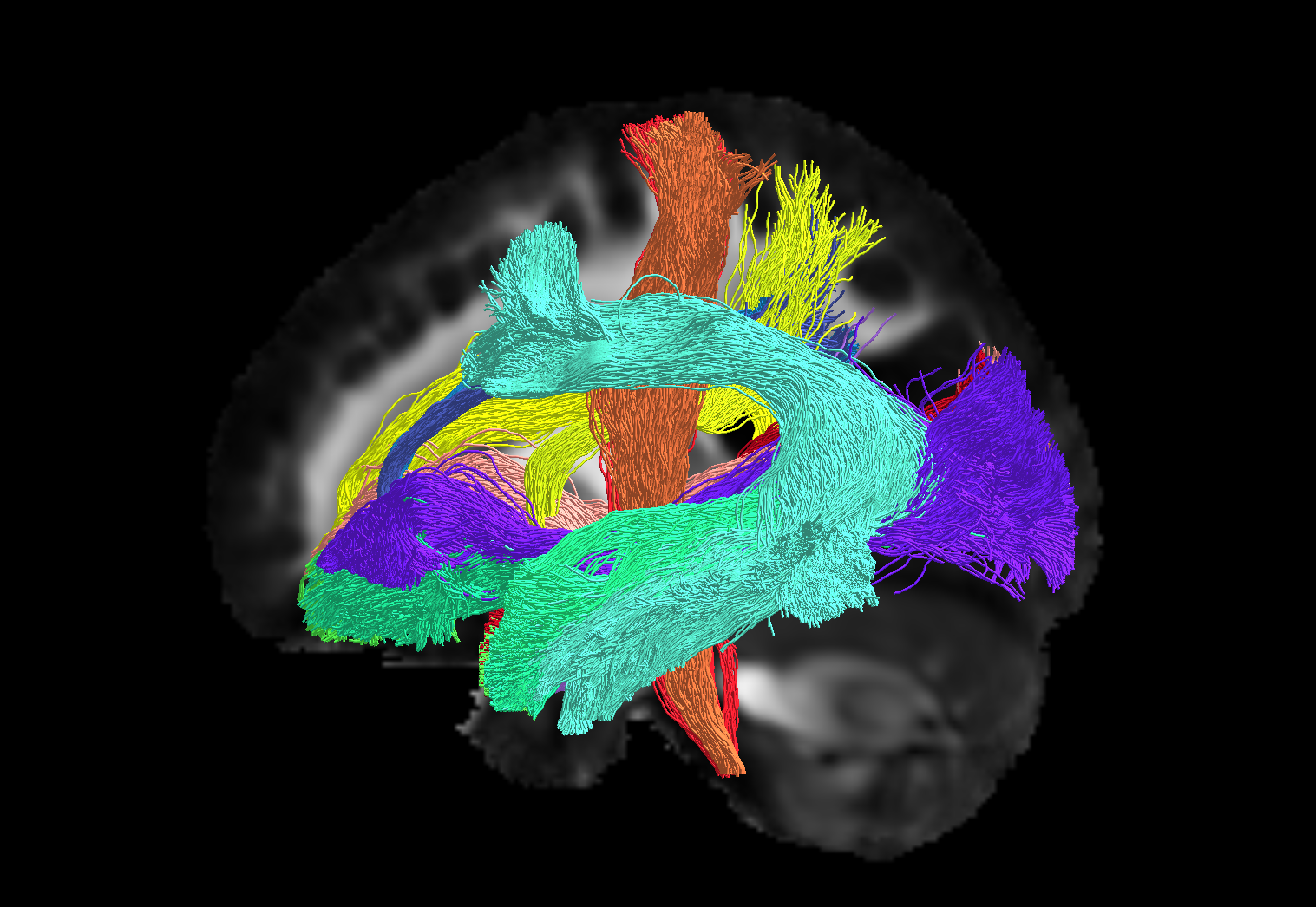}
  \includegraphics[width=0.32\textwidth]{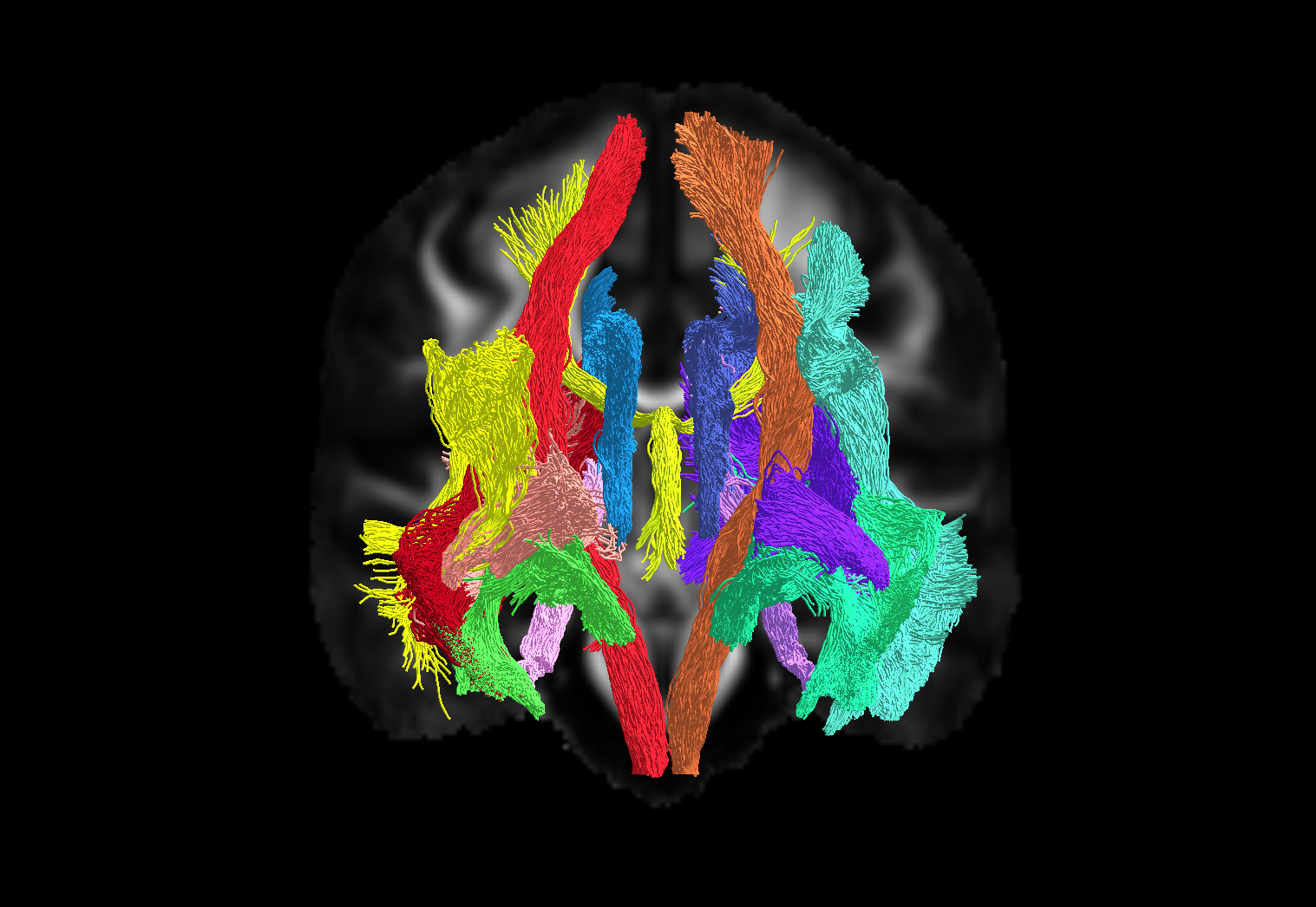}
  \caption{\label{fig:majorDTI}17 major DTI fiber bundles measured using Fractional Anisotropy (FA). The 48 selected for our analysis include a subset of these, which have been identified as critical regions that signal the beginnings of cognitive impairment.}
\end{figure*}

\newpage

\subsubsection*{Cognitive Evaluations}
The battery of cognitive test scores in our analysis included a breadth of evaluations chosen specifically for their coverage of various measures of cognition. Among all tests given to the cohort, the following 17 were selected by expert clinicians and researchers in the field for their coverage and their potential value in understanding trends across groups.
{\small
\begin{multicols}{2}
\begin{enumerate}
\item WAIS-III Digit Span Forward Raw Score
\item WAIS-III Digit Span Backward Raw Score
\item WAIS-III Letter-Number Sequencing Raw Score
\item COWAT CFL Score
\item Boston Naming Test Total Score
\item RAVLT Learning Trial A1 Raw Score
\item RAVLT Learning Trial A2 Raw Score
\item RAVLT Learning Trial A3 Raw Score
\item RAVLT Learning Trial A4 Raw Score
\item RAVLT Learning Trial A5 Raw Score
\item RAVLT Learning Trial A6 Raw Score
\item RAVLT Delayed Recall Raw Score
\item Stroop Word/Color-Word Scaled Score
\item Trail-Making Test Part A
\item Trail-Making Test Part B
\item Clock Drawing Test Score
\item Center for Epidemiologic Studies Depression Scale Score
\end{enumerate} 
\end{multicols}
}

{\bf WAIS-III.} This is the most widely used IQ test. The Digit Span examination is specifically meant to evaluate the working memory of an individual. Participants are required to attempt to recall a series of numbers in order, both forwards and backwards. Letter-Number sequencing reflects a similar idea, but with a mix of both numbers and letters in increasing and alphabetical order, and is meant to be an indicator of more complex mental control \cite{wechsler2014wechsler}.

{\bf Rey Auditory Visual Learning Test.} This test is specifically meant to evaluate all aspects of memory. Each trial evaluates a different type of memory, ranging from short-term and working memory to procedural and episodic memory. \cite{schmidt1996rey}.

{\bf Trail-Making Test.} This is a very popular test in providing information about executive function in the brain. The test consists of drawing lines among a randomly generated set of points in a square, where each point is labeled with a number. In Part A, participants must `connect the dots' in increasing numerical order, and in Part B in increasing numerical and alphabetical order. The score on the test is primarily dictated by the time in seconds it takes to complete the task for 25 of these `dots.' More background information and normative analyses can be found in \cite{tombaugh2004trail}.

Other tests similarly measure various cognitive function. While the Depression Scale Score did not crop up in any of our analyses here, it has been shown that depression is strongly associated with AD-related decline \cite{wragg1989overview}.
\subsection*{Detailed Imaging with Cognitive Tests Results}
In the following tables we provide additional details of the statistical test we performed on the preclinical AD cohort. Each set contains a set of features found to display significant group difference (at the $p \leq 0.05$ level) along the covariance trajectory divided by the group variable indicated.

While some of these associations are well-known, few have been indicated as novel by AD researchers and clinicians, and to be of interesting value for further analysis. 

\begin{table}[!]
	\centering
	\begin{tabular}{p{0.8cm}p{5.5cm}p{6cm}}
		\toprule
		\multicolumn{3}{c}{\textbf{Amyloid Load (PiB Positivity)}}\\ \midrule \midrule
		Set 1 & PiB Angular L/R & PiB Cingulum Ant L/R \\
		& PiB Cingulum Post L/R & PiB Frontal Med Orb L/R \\
		& PiB Precuneus L/R & PiB Temporal Sup L/R \\
		& PiB Temporal Mid L/R & \textbf{PiB SupraMarginal L} \\
		\midrule
		Set 2     & FA Cerebral peduncle R   & FA Cerebral peduncle L	\\
		& MD Corticospinal tract R	& MD Corticospinal tract L		\\
			     & Trail-Making Test Part A Score  & MD Cerebral peduncle R \\ 
			    &PET Cingulum Post R  &  \\ \midrule\bottomrule
	\end{tabular}
	\caption{Group difference across Amyloid Load (PiB Positivity)}
\end{table}

\begin{table}[!]
	\centering
	\begin{tabular}{p{0.8cm}p{5.5cm}p{6cm}}
		\toprule
		\multicolumn{3}{c}{\textbf{Gender}}\\ \midrule \midrule
        
		Set 1     & Rey Audio and Verbal Learning Test   & FA Cingulum  L	\\
		\midrule
		   & FA Medial lemniscus L	& FA Cingulum (hippocampus) L		\\
		Set 2    & FA Posterior thalamic radiation \newline (include optic radiation) L& \\ \midrule
    	Set 3    & FA Corticospinal tract R& FA Superior fronto-occipital fasciculus  R \\	\midrule\bottomrule
	\end{tabular}
    \caption{Group difference in gender}
\end{table}

\begin{table}[!]
	\centering
	\begin{tabular}{p{0.8cm}p{5.5cm}p{6cm}}
		\toprule
		\multicolumn{3}{c}{\textbf{Genotype: APOE4}}\\ \midrule \midrule
		Set 1    &Digit Span Backward Raw Score & Stroop Color-word \\ 
		 &PiB Cingulum Post L &    PiB Cingulum Post R         \\
		&PiB Frontal Med Orb L & 		PiB Frontal Med Orb R \\ 
		&PiB Precuneus L & PiB Precuneus R \\ 
		& PiB SupraMarginal &  PiB Temporal Mid R \\ \midrule		 \bottomrule
	\end{tabular}
	\caption{Group difference across Genotype APOE4 expression}
\end{table}

\begin{table}[!]
	\centering
	\begin{tabular}{p{0.8cm}p{5.5cm}p{6cm}}
		\toprule
		\multicolumn{3}{c}{\textbf{Consensus Conference}}\\ \midrule \midrule
		Set 2    &Digit Span Backward Raw Score & Stroop Color-word \\ 
		 &PiB Cingulum Post L &    PiB Cingulum Post R         \\
		&PiB Frontal Med Orb L & 		PiB Frontal Med Orb R \\ 
		&PiB Precuneus L & PiB Precuneus R \\ 
		& PiB SupraMarginal &  PiB Temporal Mid R \\ \midrule
		\bottomrule
	\end{tabular}
	\caption{Group difference across Expert MCI Diagnosis}
\end{table}

%

\subsection*{Detailed results on larger cohort with only cognitive scores}
We also applied our method to a larger cohort consisting of approximately 1500 subjects with varying temporal measurements on the battery of cognitive tests. Each individual had approximately 3 visits worth of data, and so our total number of measurements was approximately $n = 4000$. In addition to the groupings used above, we were able to use an algorithmic cognitive impairment (ACI) measure to further evaluate the model against a factor which is known to be group-separating. Below are the tabulated feature sets identified by our model for each of the group separations described in the main paper. In this case to increase interpretability of the results we limited our search to groups of 3-6 features.

When grouped by genotype, the most indicative subset as shown in Table \ref{fig:coggenotype}. These tests are most closely associated with memory, and we see that no tests of executive function or spatial ability (Trail-Making or Clock Drawing) were included.

In addition to an algorithmic measure of impairment, a conference of expert clinicians and researchers have given each individual a clinical impairment diagnosis for each time they underwent the cognitive battery. Using this as a group separator, we found a large number of overlapping subsets that displayed significant group difference at the $p = 0.05$ level. These are shown in Table \ref{fig:cogcc}. Trail-Making Test Parts A and B appeared in all identified subsets.

\begin{table}[h]
	\centering
	\begin{tabular}{p{0.8cm}p{5.5cm}p{6cm}}
		\toprule
		\multicolumn{3}{c}{\textbf{Algorithmic Cognitive Impairment}}\\ \midrule \midrule
		Set 1 & Boston Naming Test Total Score & RAVLT Learning Trial A1 Raw Score \\ 
		 & RAVLT Learning Trial A6 Raw Score &  \\ \bottomrule
		\bottomrule
	\end{tabular}
	\caption{Group Difference Localization Across Algorithmic Impairment}
\end{table}

\begin{table}[h]
	\centering
	\begin{tabular}{p{0.8cm}p{5.5cm}p{6cm}}
		\toprule
		\multicolumn{3}{c}{\textbf{Genotype: ApoE4}}\\ \midrule \midrule
		Set 1 & WAIS-III Digit Span Backward Raw Score & RAVLT Learning Trial A3 Raw Score \\
		& RAVLT Learning Trial A4 Raw Score & RAVLT Learning Trial A5 Raw Score \\
		\bottomrule
		\bottomrule
	\end{tabular}
	\caption{Group Difference Localization Across ApoE4 Genotype}
	\label{fig:coggenotype}
\end{table}

\begin{table}[h]
	\centering
	\begin{tabular}{p{5.5cm}p{6cm}}
		\toprule
		\multicolumn{2}{c}{\textbf{Expert Consensus Measure}}\\ \midrule \midrule
		WAIS-3 Letter-Number Sequencing Raw Score &
		Boston Naming Test Total Score \\
		RAVLT Learning Trial A2 Raw Score &
		RAVLT Learning Trial A3 Raw Score \\
		RAVLT Learning Trial A4 Raw Score &
		RAVLT Learning Trial A5 Raw Score \\
		RAVLT Learning Trial A6 Raw Score &
		RAVLT Delayed Recall Raw Score \\
		Trail-Making Test Part A &
		Trail-Making Test Part B \\
		Clock Drawing Test Score &
		Center for Epidemiologic Studies Depression Scale Score \\
		\bottomrule
		\bottomrule
	\end{tabular}
	\caption{Group Difference Localization Across Expert Clinical Diagnosis}
	\label{fig:cogcc}
\end{table}

\begin{figure}[hb!]
\centering
\includegraphics*[width=0.8\textwidth]{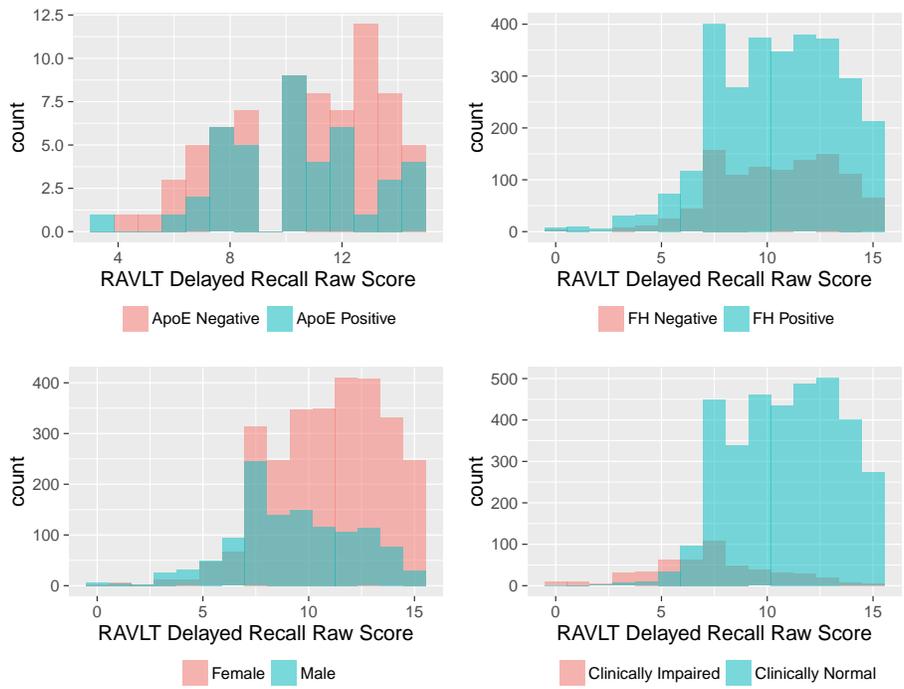}
\caption{Histograms of the Delayed Recall Scores for all time points for the $\sim 4000$ individual measurements across different group separations. We note in particular that the results found from the genotype separation above would have been hard to identify since given the distributions are extremely overlapping (top left) for this particular separation.}
\end{figure}
\clearpage
\section{Differential Geometry Basics and Notes.}
\label{sec:diffgeom}
We briefly introduce notions that we used in the main paper. For more details, we refer the reader to \cite{do1992riemannian,lee2003smooth,spivak1981comprehensive}.

\textbf{Differentiable manifold.}
A \textit{differentiable (smooth) manifold} of dimension $n$ is a set $\Mc$ and a maximal family of \textit{injective} mappings $\varphi_{i}:U_{i}
\subset \textbf{R}^{n} \rightarrow \Mc$ of open sets $U_{i}$ of
$\textbf{R}^{n}$ into $\Mc$ such that:
\begin{enumerate}
\item $\cup_{i}\varphi_{i}(U_{i}) =\Mc$
\item for any pair $i,j$ with $\varphi_{i} (U_{i}) \cap
\varphi_{j} (U_{j}) = W \neq \phi$, the sets $\varphi_{i}^{-1}(W)$
and $\varphi_{j}^{-1}(W)$ are open sets in $\textbf{R}^{n}$ and the
mappings $\varphi_{j}^{-1} \circ \varphi_{i}$ are
differentiable, where $\circ$ denotes function composition. 
\item The family $\{(U_{i},\varphi_{i})\}$ is maximal relative to
the conditions (1) and (2). 
\end{enumerate}

Roughly speaking, a differentiable (smooth) manifold $\Mc$ is a topological
space that is locally similar to Euclidean space and has a globally
defined differential structure. 

\textbf{Tangent space ($T_{p}\Mc$).} The \textit{tangent space} at $p \in \Mc$ is the vector space, which consists of 
the tangent vectors of {\em all} possible curves passing through $p$. 

\noindent\textbf{Tangent bundle ($T\Mc$).} The \textit{tangent bundle} of $\Mc$ is the disjoint union of tangent spaces at all points of $\Mc$, 
$T\Mc = \coprod_{p \in \Mc}T_{p}\Mc$. 
The tangent bundle is equipped with a natural \textit{projection map} $\pi: T\Mc \rightarrow \Mc$. 

\textbf{Riemannian manifold.} A \textit{Riemannian manifold} is 
equipped with a
smoothly varying metric (inner product), which is called \textit{Riemannian metric}. 

Various geometric notions, e.g., the angle between two curves or the length of a curve, can be extended on the manifold. \newline 

\textbf{Geodesic curves.} A geodesic curve on a Riemannian manifold is the locally shortest (distance-minimizing) curve.
These are analogous to straight lines in Euclidean space and a main object to generalize linear models to Riemannian manifolds.

\textbf{Geodesic distance.} The \textit{geodesic distance}
between two points on $\Mc$ is the length of the shortest {\em geodesic} curve connecting the two points. More generally, distance between two points on Riemannian manifolds is defined by the infimum of the length of all differentiable curves connecting the two points. Let $\gamma$ be a continuously differentiable curve $\gamma:[a,b] \rightarrow \Mc$ between $p$ and $q$ in $\Mc$ and $g$ be a metric tensor in $Mc$.
Then, formally, the distance between $p$ and $q$ is defined as
\begin{equation}
\text{d}(p,q) := \inf_\gamma \int_a^b \sqrt{g_\gamma(t) (\dot{\gamma}(t), \dot{\gamma}(t))} dt
\end{equation}
where $\gamma(a)=p$ and $\gamma(b)=q$.

\textbf{Exponential map}. An exponential map is a map from a tangent space $T_p\Mc$  to $\Mc$, which is usually locally defined due to the existence and uniqueness of ordinary differential equation for the map. The geodesic curve from $y_i$ to $y_j$ can be parameterized by a tangent vector in the tangent space at $y_i$ with an exponential map $\EXP(y_i,\cdot ): T_{y_i}\Mc \rightarrow \Mc$.

\textbf{Logarithm map.}
The inverse of the exponential map is the \textit{logarithm map}, $\LOG(y_i,\cdot):\M \rightarrow T_{y_i}\M$. 
For completeness, Table \ref{tab:comp} shows corresponding operations in the Euclidean space and Riemannian manifolds.
In the main paper, for the readability when operations are multiply nested, exponential map and its inverse logarithm map are denoted by $\EXP(p, x)$ and $\LOG(p, v)$ respectively, where $p, x \in \Mc$ and $v\in T_p\Mc$. They are usually denoted $\exp_p(x)$ and $\log_p(v)$ in most of differential geometry books. 
 
Separate from the above notations, matrix exponential, i.e, $\exp(X):= \sum \frac{1}{k!} X^k$, where $0!=1$ and $X^0=I$  and matrix logarithm are denoted by as $\exp(\cdot)$ and $\log(\cdot)$.

\renewcommand{\arraystretch}{1.5}
\begin{table}[!b]
{\footnotesize
\begin{center}
    \begin{tabular}{| l | l | l | }
    \hline
    Operation & Euclidean & Riemannian  \\  \hline 
    \footnotesize Subtraction & $\overrightarrow{x_i x_j} = x_j - x_i$ & $\overrightarrow{x_i x_j} = \LOG(x_i,x_j)$ \\ 
    \footnotesize Addition & $x_i + \overrightarrow{x_j x_k}$ & $\EXP(x_i,\overrightarrow{x_j x_k})$ \\     
    \footnotesize Distance  & $\| \overrightarrow{x_i x_j} \|$ & $\|\LOG(x_i,x_j) \|_{x_i}$ \\ 
    Mean  & $\sum_{i=1}^{n} \overrightarrow{\bar{x}x_{i}}=0$ & \footnotesize $\sum_{i=1}^{n} \LOG(\bar{x}, x_i)=0$  \\ 
    Covariance & \footnotesize$\EE \left [ (x_i - \bar{x})(x_i - \bar{x})^{T} \right ]$&\footnotesize $\EE \left [ \LOG(\bar{x}, x)\LOG(\bar{x}, x)^{T} \right ]$\\ [1ex] \hline 
  \end{tabular}
\end{center}
}
\caption{\footnotesize Basic operations in Euclidean space and Riemannian manifolds.}
\label{tab:comp}
\end{table}

\noindent {\bf Intrinsic mean.} 
Let $d(\cdot,\cdot)$ define the distance between two points. The intrinsic (or Karcher) mean is the minimizer to
{\small \begin{equation}
\label{eq:karchermean}
\bar{y} = \arg \min_{y \in \Mc} \sum_{i=1}^{N} d(y,y_{i})^{2}, 
\end{equation}}
which may be an arithmetic, geometric or harmonic mean depending on $d(\cdot,\cdot)$. A Karcher mean is a local minimum to \eqref{eq:karchermean} and a global minimum is referred as a Fr\'{e}chet mean. On manifolds, the Karcher mean satisfies $\sum_{i=1}^{N} \LOG_{\bar{y}}y_i =0$.

 \begin{figure}[H]
 \begin{center}
 \begin{minipage}{.45\linewidth} 
 \begin{algorithmic}[plain]
 \STATE \textbf{Algorithm 1 : Karcher mean}
 \STATE Input: $y_{1}, \ldots, y_{N} \in \M$, $\alpha$
 \STATE Output: $\bar{y} \in \M$
 \STATE $\bar{y}_{0} = y_{1}$
 \WHILE {$ \| \sum_{i=1}^{N} \LOG(\bar{y}_{k},y_{i})\| > \epsilon$}
 \STATE $\Delta\bar{y} = \frac{\alpha}{N} \sum_{i=1}^{N}\LOG (\bar{y}_k,y_i)$
 \STATE $\bar{y}_{k+1} = \EXP(\bar{y}_k,\Delta \bar{y})$
 \ENDWHILE
  \end{algorithmic}
  \end{minipage}
  \end{center}
 \caption{Karcher mean on manifolds}
     \label{alg:karcher} 
 \end{figure}
 
This identity implies the first order necessary condition of \eqref{eq:karchermean}, i.e., $\bar{y}$ is a local minimum with a zero norm gradient \cite{karcher1977riemannian}. In general, on manifolds, the existence and uniqueness of th.e Karcher mean is not guaranteed unless we assume, for uniqueness, that the data is in a small neighborhood.\\

\noindent {\bf Parallel transport.} 
Let $\Mc$ be a differentiable manifold with an affine connection $\nabla$ and $I$ be an open interval. Let $c:I \rightarrow \Mc$ be a differentiable curve in $\Mc$ and let $V_0$ be a tangent vector in $T_{c(t_0)}\Mc$, where $t_{0} \in I$. 
Then, there exists a unique parallel vector field $V$ along $c$, such that $V(t_0)=V_0$. Here, $V(t)$ is called the \textit{parallel transport} of $V(t_0)$ along $c$. 

\subsection*{Geometry of SPD manifolds}
Covariance matrices are symmetric positive definite matrices. 
Let SPD($n$) be a manifold for symmetric positive definite matrices of size $n\times n$. This forms a quotient space $GL(n)/O(n)$, where
$GL(n)$ denotes the general linear group (the group of $(n \times n)$ nonsingular matrices) and $O(n)$ is the orthogonal group 
(the group of $(n \times n)$ orthogonal matrices). 
The inner product of two tangent vectors $u,v \in T_{p}\Mc$ is given by 
\begin{equation}
\begin{split}
  \langle u,v \rangle_{p} = \tr(p^{-1/2}up^{-1}vp^{-1/2})
\end{split}
\label{eq:metricSPD}
\end{equation}
This plays the role of the Fisher-Rao metric in the statistical model of multivariate distributions.
Here, $T_{p}\Mc$ is a tangent space at $p$ (which is a vector space) is the space of symmetric matrices of dimension $(n+1)n/2$.
The geodesic distance is $d(p,q)^{2} = \tr( \log^{2}(p^{-1/2}qp^{-1/2}))$.

The exponential map and logarithm map are  given as 
\begin{equation}
\begin{split}
  \EXP(p,v) = p^{1/2} \exp(p^{-1/2}vp^{-1/2})p^{1/2}, \;\;
  \LOG(p,q) = p^{1/2} \log(p^{-1/2}qp^{-1/2})p^{1/2}.
\end{split}
\end{equation}

Let $p, q$ be in SPD($n$) and a tangent vector $w \in T_{p}\Mc$, the
tangent vector in $T_{q}\Mc$ which is the parallel transport of $w$ along
the shortest geodesic from $p$ to $q$ is given by 
\begin{equation}
\begin{split}
\Gamma_{p \rightarrow q}(w) &= p^{1/2}rp^{-1/2}wp^{-1/2}rp^{1/2} \\
\text{where } r &= \exp \left (p^{-1/2}\frac{v}{2}p^{-1/2} \right ) \text{ and
}v = \LOG(p,q)
\end{split}
\end{equation}


\newpage
\bibliography{main.bib}

\begin{thebibliography}{71}
\providecommand{\natexlab}[1]{#1}
\providecommand{\url}[1]{\texttt{#1}}
\expandafter\ifx\csname urlstyle\endcsname\relax
  \providecommand{\doi}[1]{doi: #1}\else
  \providecommand{\doi}{doi: \begingroup \urlstyle{rm}\Url}\fi

\bibitem[Albert et~al.(2001)Albert, Moss, Tanzi, and
  Jones]{albert2001preclinical}
Marilyn~S Albert, Mark~B Moss, Rudolph Tanzi, and Kenneth Jones.
\newblock Preclinical prediction of ad using neuropsychological tests.
\newblock \emph{Journal of the International Neuropsychological Society},
  7\penalty0 (05):\penalty0 631--639, 2001.

\bibitem[Albert et~al.(2011)Albert, DeKosky, Dickson, Dubois, Feldman, Fox,
  Gamst, Holtzman, Jagust, Petersen, et~al.]{albert2011diagnosis}
Marilyn~S Albert, Steven~T DeKosky, Dennis Dickson, Bruno Dubois, Howard~H
  Feldman, Nick~C Fox, Anthony Gamst, David~M Holtzman, William~J Jagust,
  Ronald~C Petersen, et~al.
\newblock The diagnosis of mild cognitive impairment due to alzheimer’s
  disease: Recommendations from the national institute on aging-alzheimer’s
  association workgroups on diagnostic guidelines for alzheimer's disease.
\newblock \emph{Alzheimer's \& dementia}, 7\penalty0 (3):\penalty0 270--279,
  2011.

\bibitem[Arias-Castro et~al.(2011)Arias-Castro, Cand{\`e}s,
  et~al.]{arias2011detection}
E.~Arias-Castro, E.~J. Cand{\`e}s, et~al.
\newblock Detection of an anomalous cluster in a network.
\newblock \emph{The Annals of Statistics}, pages 278--304, 2011.

\bibitem[Arora and Barak(2009)]{arora2009computational}
Sanjeev Arora and Boaz Barak.
\newblock \emph{Computational complexity: a modern approach}.
\newblock Cambridge University Press, 2009.

\bibitem[Banerjee et~al.(2015)Banerjee, Chakraborty,
  et~al.]{banerjee2015nonlinear}
M.~Banerjee, R.~Chakraborty, et~al.
\newblock Nonlinear regression on {R}iemannian manifolds and its applications
  to neuro-image analysis.
\newblock In \emph{MICCAI}, pages 719--727. 2015.

\bibitem[Belilovsky et~al.(2015)Belilovsky, Varoquaux, and
  Blaschko]{belilovsky2015hypothesis}
Eugene Belilovsky, Ga{\"e}l Varoquaux, and Matthew~B Blaschko.
\newblock Hypothesis testing for differences in gaussian graphical models:
  Applications to brain connectivity.
\newblock \emph{arXiv preprint arXiv:1512.08643}, 2015.

\bibitem[Cai et~al.(2011)Cai, Liu, et~al.]{cai2011constrained}
T.~Cai, W.~Liu, et~al.
\newblock A constrained $\ell_1$ minimization approach to sparse precision
  matrix estimation.
\newblock \emph{JASA}, 106\penalty0 (494):\penalty0 594--607, 2011.

\bibitem[Corder et~al.(1993)Corder, Saunders, Strittmatter, Schmechel, Gaskell,
  Small, Roses, Haines, and Pericak-Vance]{corder1993gene}
EH~Corder, AM~Saunders, WJ~Strittmatter, DE~Schmechel, PC~Gaskell, GWet~al
  Small, AD~Roses, JL~Haines, and M~Al Pericak-Vance.
\newblock Gene dose of apolipoprotein e type 4 allele and the risk of
  alzheimer’s disease in late onset families.
\newblock \emph{Science}, 261\penalty0 (5123):\penalty0 921--923, 1993.

\bibitem[Cornea et~al.(2016)Cornea, Zhu, et~al.]{cornea2016regression}
E.~Cornea, H.~Zhu, et~al.
\newblock Regression models on {R}iemannian symmetric spaces.
\newblock \emph{JRSS-B}, 2016.

\bibitem[Danaher et~al.(2014)Danaher, Wang, et~al.]{danaher2014joint}
P.~Danaher, P.~Wang, et~al.
\newblock The joint graphical {LASSO} for inverse covariance estimation across
  multiple classes.
\newblock \emph{JRSS-B}, 76\penalty0 (2):\penalty0 373--397, 2014.

\bibitem[Darst et~al.(2017)Darst, Koscik, Racine, Oh, Krause, Carlsson,
  Zetterberg, Blennow, Christian, Bendlin, et~al.]{darst2017pathway}
Burcu~F Darst, Rebecca~L Koscik, Annie~M Racine, Jennifer~M Oh, Rachel~A
  Krause, Cynthia~M Carlsson, Henrik Zetterberg, Kaj Blennow, Bradley~T
  Christian, Barbara~B Bendlin, et~al.
\newblock Pathway-specific polygenic risk scores as predictors of
  amyloid-$\beta$ deposition and cognitive function in a sample at increased
  risk for alzheimer’s disease.
\newblock \emph{Journal of Alzheimer's Disease}, 55\penalty0 (2):\penalty0
  473--484, 2017.

\bibitem[Davies and Kovac(2001)]{davies2001local}
P~Laurie Davies and Arne Kovac.
\newblock Local extremes, runs, strings and multiresolution.
\newblock \emph{Annals of Statistics}, pages 1--48, 2001.

\bibitem[Do~Carmo(1992)]{do1992riemannian}
M.~P Do~Carmo.
\newblock \emph{Riemannian geometry}.
\newblock Springer, 1992.

\bibitem[Douaud et~al.(2011)Douaud, Jbabdi, Behrens, Menke, Gass, Monsch, Rao,
  Whitcher, Kindlmann, Matthews, et~al.]{douaud2011dti}
Gwena{\"e}lle Douaud, Sa{\^a}d Jbabdi, Timothy~EJ Behrens, Ricarda~A Menke,
  Achim Gass, Andreas~U Monsch, Anil Rao, Brandon Whitcher, Gordon Kindlmann,
  Paul~M Matthews, et~al.
\newblock Dti measures in crossing-fibre areas: increased diffusion anisotropy
  reveals early white matter alteration in mci and mild alzheimer's disease.
\newblock \emph{Neuroimage}, 55\penalty0 (3):\penalty0 880--890, 2011.

\bibitem[Du et~al.(2014)Du, Goh, Kushnarev, and Qiu]{du2014geodesic}
J.~Du, A.~Goh, S.~Kushnarev, and A.~Qiu.
\newblock Geodesic regression on orientation distribution functions with its
  application to an aging study.
\newblock \emph{NeuroImage}, 87:\penalty0 416--426, 2014.

\bibitem[Fan et~al.(2012)Fan, Han, et~al.]{fan2012control}
J.~Fan, X.~Han, et~al.
\newblock Control of the fdr under arbitrary covariance dependence.
\newblock \emph{JASA}, 2012.

\bibitem[Fletcher(2013)]{fletcher2013geodesic}
P~T. Fletcher.
\newblock Geodesic regression and the theory of least squares on {R}iemannian
  manifolds.
\newblock \emph{IJCV}, 105\penalty0 (2):\penalty0 171--185, 2013.

\bibitem[Fletcher and Joshi(2007)]{fletcher2007riemannian}
T.P. Fletcher and S.~Joshi.
\newblock Riemannian geometry for the statistical analysis of diffusion tensor
  data.
\newblock \emph{Signal Processing}, 87\penalty0 (2):\penalty0 250--262, 2007.

\bibitem[Fratiglioni et~al.(1991)Fratiglioni, Grut, Forsell, Viitanen,
  Grafstr{\"o}m, Holmen, Ericsson, B{\"a}ckman, Ahlbom, and
  Winblad]{fratiglioni1991prevalence}
Laura Fratiglioni, M~Grut, Y~Forsell, M~Viitanen, M~Grafstr{\"o}m, K~Holmen,
  K~Ericsson, L~B{\"a}ckman, A~Ahlbom, and B~Winblad.
\newblock Prevalence of alzheimer's disease and other dementias in an elderly
  urban population relationship with age, sex, and education.
\newblock \emph{Neurology}, 41\penalty0 (12):\penalty0 1886--1886, 1991.

\bibitem[Friedman et~al.(2008)Friedman, Hastie, et~al.]{friedman2008sparse}
J.~Friedman, T.~Hastie, et~al.
\newblock Sparse inverse covariance estimation with the graphical {LASSO}.
\newblock \emph{Biostatistics}, 9\penalty0 (3):\penalty0 432--441, 2008.

\bibitem[Geer(2000)]{geer2000empirical}
S.A. Geer.
\newblock \emph{Empirical Processes in M-estimation}, volume~6.
\newblock Cambridge university press, 2000.

\bibitem[Hardle and Mammen(1993)]{hardle1993comparing}
Wolfgang Hardle and Enno Mammen.
\newblock Comparing nonparametric versus parametric regression fits.
\newblock \emph{The Annals of Statistics}, pages 1926--1947, 1993.

\bibitem[Hardy and Selkoe(2002)]{hardy2002amyloid}
John Hardy and Dennis~J Selkoe.
\newblock The amyloid hypothesis of alzheimer's disease: progress and problems
  on the road to therapeutics.
\newblock \emph{Science}, 297\penalty0 (5580):\penalty0 353--356, 2002.

\bibitem[Hardy and Higgins(1992)]{hardy1992alzheimer}
John~A Hardy and Gerald~A Higgins.
\newblock Alzheimer's disease: the amyloid cascade hypothesis.
\newblock \emph{Science}, 256\penalty0 (5054):\penalty0 184, 1992.

\bibitem[Hong et~al.(2015)Hong, Singh, Kwitt, and Niethammer]{hong2015group}
Yi~Hong, Nikhil Singh, Roland Kwitt, and Marc Niethammer.
\newblock Group testing for longitudinal data.
\newblock In \emph{International Conference on Information Processing in
  Medical Imaging}, pages 139--151. Springer, 2015.

\bibitem[Jack~Jr et~al.(2010)Jack~Jr, Wiste, Vemuri, Weigand, Senjem, Zeng,
  Bernstein, Gunter, Pankratz, Aisen, et~al.]{jack2010brain}
Clifford~R Jack~Jr, Heather~J Wiste, Prashanthi Vemuri, Stephen~D Weigand,
  Matthew~L Senjem, Guang Zeng, Matt~A Bernstein, Jeffrey~L Gunter, Vernon~S
  Pankratz, Paul~S Aisen, et~al.
\newblock Brain beta-amyloid measures and magnetic resonance imaging atrophy
  both predict time-to-progression from mild cognitive impairment to
  alzheimer’s disease.
\newblock \emph{Brain}, 133\penalty0 (11):\penalty0 3336--3348, 2010.

\bibitem[Jayasumana et~al.(2013)]{jayasumanakernel}
S.~Jayasumana et~al.
\newblock Kernel methods on the {Riemannian} manifold of {SPD} matrices.
\newblock In \emph{CVPR}, 2013.

\bibitem[Jeng et~al.(2010)Jeng, Cai, and Li]{jeng2010optimal}
X~Jessie Jeng, T~Tony Cai, and Hongzhe Li.
\newblock Optimal sparse segment identification with application in copy number
  variation analysis.
\newblock \emph{Journal of the American Statistical Association}, 105\penalty0
  (491):\penalty0 1156--1166, 2010.

\bibitem[Jordan(1998)]{jordan1998learning}
M.I. Jordan.
\newblock \emph{Learning in graphical models}, volume~89.
\newblock Springer Science \& Business Media, 1998.

\bibitem[Karcher(1977)]{karcher1977riemannian}
Hermann Karcher.
\newblock Riemannian center of mass and mollifier smoothing.
\newblock \emph{Communications on pure and applied mathematics}, 30\penalty0
  (5):\penalty0 509--541, 1977.

\bibitem[Kim et~al.(2014)Kim, Adluru, et~al.]{hjkimcvpr2014}
Hyunwoo~J Kim, Nagesh Adluru, et~al.
\newblock Multivariate general linear models on {R}iemannian manifolds with
  applications to statistical analysis of diffusion weighted images.
\newblock In \emph{CVPR}, pages 2705--2712, 2014.

\bibitem[Koller and Friedman(2009)]{koller2009probabilistic}
D.~Koller and N.~Friedman.
\newblock \emph{Probabilistic graphical models: principles and techniques}.
\newblock MIT press, 2009.

\bibitem[Laurent and Massart(2000)]{laurent2000adaptive}
Beatrice Laurent and Pascal Massart.
\newblock Adaptive estimation of a quadratic functional by model selection.
\newblock \emph{Annals of Statistics}, pages 1302--1338, 2000.

\bibitem[Lauritzen(1996)]{lauritzen1996graphical}
S.L. Lauritzen.
\newblock \emph{Graphical models}.
\newblock Clarendon Press, 1996.

\bibitem[Lee(2003)]{lee2003smooth}
John~M Lee.
\newblock Smooth manifolds.
\newblock In \emph{Introduction to Smooth Manifolds}, pages 1--29. Springer,
  2003.

\bibitem[Lee(2012)]{lee2012introduction}
John~M Lee.
\newblock \emph{Introduction to smooth manifolds}.
\newblock Springer, 2012.

\bibitem[Lezak(2004)]{lezak2004neuropsychological}
Muriel~Deutsch Lezak.
\newblock \emph{Neuropsychological assessment}.
\newblock Oxford University Press, USA, 2004.

\bibitem[Lin et~al.(2014)Lin, Shih, Tseng, Chu, Wu, Chen, Tang, and
  Chiu]{lin2014cingulum}
Yi-Cheng Lin, Yao-Chia Shih, Wen-Yih~I Tseng, Yu-Hsiu Chu, Meng-Tien Wu, Ta-Fu
  Chen, Pei-Fang Tang, and Ming-Jang Chiu.
\newblock Cingulum correlates of cognitive functions in patients with mild
  cognitive impairment and early alzheimer’s disease: a diffusion spectrum
  imaging study.
\newblock \emph{Brain topography}, 27\penalty0 (3):\penalty0 393--402, 2014.

\bibitem[Liu et~al.(2009)Liu, Lafferty, et~al.]{liu2009nonparanormal}
H.~Liu, J.~Lafferty, et~al.
\newblock The nonparanormal: Semiparametric estimation of high dimensional
  undirected graphs.
\newblock \emph{JMLR}, 10:\penalty0 2295--2328, 2009.

\bibitem[Liu et~al.(2012)Liu, Han, et~al.]{liu2012high}
Han Liu, Fang Han, et~al.
\newblock High-dimensional semiparametric gaussian copula graphical models.
\newblock \emph{The Annals of Statistics}, 40\penalty0 (4):\penalty0
  2293--2326, 2012.

\bibitem[McArdle and Bell(2000)]{mcardle2000introduction}
J.J. McArdle and R.Q. Bell.
\newblock An introduction to latent growth models for developmental data
  analysis.
\newblock 2000.

\bibitem[Mori et~al.(2008)Mori, Oishi, et~al.]{mori2008stereotaxic}
S.~Mori, K.~Oishi, et~al.
\newblock Stereotaxic white matter atlas based on diffusion tensor imaging in
  an icbm template.
\newblock \emph{Neuroimage}, 40\penalty0 (2):\penalty0 570--582, 2008.

\bibitem[Mormino et~al.(2014)Mormino, Betensky, Hedden, Schultz, Ward,
  Huijbers, Rentz, Johnson, Sperling, Initiative, et~al.]{mormino2014amyloid}
Elizabeth~C Mormino, Rebecca~A Betensky, Trey Hedden, Aaron~P Schultz, Andrew
  Ward, Willem Huijbers, Dorene~M Rentz, Keith~A Johnson, Reisa~A Sperling,
  Alzheimer's Disease~Neuroimaging Initiative, et~al.
\newblock Amyloid and apoe $\varepsilon$4 interact to influence short-term
  decline in preclinical alzheimer disease.
\newblock \emph{Neurology}, 82\penalty0 (20):\penalty0 1760--1767, 2014.

\bibitem[Muralidharan and Fletcher(2012)]{muralidharan2012sasaki}
Prasanna Muralidharan and P~Thomas Fletcher.
\newblock Sasaki metrics for analysis of longitudinal data on manifolds.
\newblock In \emph{Computer Vision and Pattern Recognition (CVPR), 2012 IEEE
  Conference on}, pages 1027--1034. IEEE, 2012.

\bibitem[Naidich et~al.(2009)Naidich, Duvernoy, Delman, Sorensen, Kollias, and
  Haacke]{naidich2009duvernoy}
Thomas~P Naidich, Henri~M Duvernoy, Bradley~N Delman, A~Gregory Sorensen,
  Spyros~S Kollias, and E~Mark Haacke.
\newblock \emph{Duvernoy's atlas of the human brain stem and cerebellum:
  high-field MRI, surface anatomy, internal structure, vascularization and 3 D
  sectional anatomy}.
\newblock Springer Science \& Business Media, 2009.

\bibitem[Qiu et~al.(2015)Qiu, Han, et~al.]{qiu2015joint}
H.~Qiu, F.~Han, et~al.
\newblock Joint estimation of multiple graphical models from high dimensional
  time series.
\newblock \emph{JRSS-B}, 2015.

\bibitem[Racine et~al.(2014)Racine, Adluru, et~al.]{racine2014associations}
A.~M Racine, N.~Adluru, et~al.
\newblock Associations between white matter microstructure and amyloid burden
  in preclinical {AD}: a multimodal imaging investigation.
\newblock \emph{NeuroImage: Clinical}, 4:\penalty0 604--614, 2014.

\bibitem[Rimol et~al.(2010)Rimol, Agartz, Djurovic, Brown, Roddey, K{\"a}hler,
  Mattingsdal, Athanasiu, Joyner, Schork, et~al.]{rimol2010sex}
Lars~M Rimol, Ingrid Agartz, Srdjan Djurovic, Andrew~A Brown, J~Cooper Roddey,
  Anna~K K{\"a}hler, Morten Mattingsdal, Lavinia Athanasiu, Alexander~H Joyner,
  Nicholas~J Schork, et~al.
\newblock Sex-dependent association of common variants of microcephaly genes
  with brain structure.
\newblock \emph{Proceedings of the National Academy of Sciences}, 107\penalty0
  (1):\penalty0 384--388, 2010.

\bibitem[Roehrig(1988)]{roehrig1988conditions}
Charles~S Roehrig.
\newblock Conditions for identification in nonparametric and parametric models.
\newblock \emph{Econometrica: Journal of the Econometric Society}, pages
  433--447, 1988.

\bibitem[Sager et~al.(2005)Sager, Hermann, and La~Rue]{sager2005middle}
Mark~A Sager, Bruce Hermann, and Asenath La~Rue.
\newblock Middle-aged children of persons with alzheimer’s disease: Apoe
  genotypes and cognitive function in the wisconsin registry for alzheimer’s
  prevention.
\newblock \emph{Journal of geriatric psychiatry and neurology}, 18\penalty0
  (4):\penalty0 245--249, 2005.

\bibitem[Schmidt et~al.(1996)]{schmidt1996rey}
Michael Schmidt et~al.
\newblock \emph{Rey auditory verbal learning test: a handbook}.
\newblock Western Psychological Services Los Angeles, 1996.

\bibitem[Seber and Lee(2003)]{seber2003linear}
George~AF Seber and Alan~J Lee.
\newblock Linear regression analysis. hoboken.
\newblock \emph{NJ: Wiley. doi}, 10:\penalty0 9780471722199, 2003.

\bibitem[Sommer et~al.(2014)Sommer, Lauze, et~al.]{sommer2014optimization}
S.~Sommer, F.~Lauze, et~al.
\newblock Optimization over geodesics for exact principal geodesic analysis.
\newblock \emph{Adv. in Comp. Math.}, 40\penalty0 (2):\penalty0 283--313, 2014.

\bibitem[Spivak(1981)]{spivak1981comprehensive}
M.~Spivak.
\newblock \emph{Comprehensive introduction to differential geometry}.
\newblock Publish or Perish, Inc., 1981.

\bibitem[Stehr(2007)]{HEC:HEC1223}
Mark Stehr.
\newblock The effect of cigarette taxes on smoking among men and women.
\newblock \emph{Health Economics}, 16\penalty0 (12):\penalty0 1333--1343, 2007.
\newblock ISSN 1099-1050.
\newblock \doi{10.1002/hec.1223}.
\newblock URL \url{http://dx.doi.org/10.1002/hec.1223}.

\bibitem[Städler and Mukherjee(2012)]{diffnet}
Nicolas Städler and Sach Mukherjee.
\newblock Two-sample testing in high-dimensional models, 2012.

\bibitem[Talagrand(2006)]{talagrand2006generic}
Michel Talagrand.
\newblock \emph{The generic chaining: upper and lower bounds of stochastic
  processes}.
\newblock Springer Science \& Business Media, 2006.

\bibitem[Tanzi and Bertram(2005)]{tanzi2005twenty}
Rudolph~E Tanzi and Lars Bertram.
\newblock Twenty years of the alzheimer’s disease amyloid hypothesis: a
  genetic perspective.
\newblock \emph{Cell}, 120\penalty0 (4):\penalty0 545--555, 2005.

\bibitem[Tombaugh(2004)]{tombaugh2004trail}
Tom~N Tombaugh.
\newblock Trail making test a and b: normative data stratified by age and
  education.
\newblock \emph{Archives of clinical neuropsychology}, 19\penalty0
  (2):\penalty0 203--214, 2004.

\bibitem[Tuzel et~al.(2007)Tuzel, Porikli, and Meer]{tuzel2007human}
Oncel Tuzel, Fatih Porikli, and Peter Meer.
\newblock Human detection via classification on {R}iemannian manifolds.
\newblock In \emph{Computer Vision and Pattern Recognition, 2007. CVPR'07. IEEE
  Conference on}, pages 1--8. IEEE, 2007.

\bibitem[Ullman and Bentler(2003)]{ullman2003structural}
J.B. Ullman and P.M. Bentler.
\newblock \emph{Structural equation modeling}.
\newblock Wiley Online Library, 2003.

\bibitem[Walther et~al.(2010)]{walther2010optimal}
G.~Walther et~al.
\newblock Optimal and fast detection of spatial clusters with scan statistics.
\newblock \emph{The Annals of Statistics}, 38\penalty0 (2):\penalty0
  1010--1033, 2010.

\bibitem[Wang et~al.(2016)Wang, Fan, et~al.]{wang2016structured}
S.~Wang, J.~Fan, et~al.
\newblock Structured correlation detection with application to colocalization
  analysis in dual-channel fluorescence microscopic imaging.
\newblock \emph{arXiv preprint arXiv:1604.02158}, 2016.

\bibitem[Wechsler(2014)]{wechsler2014wechsler}
David Wechsler.
\newblock Wechsler adult intelligence scale--fourth edition (wais--iv), 2014.

\bibitem[Wragg and Jeste(1989)]{wragg1989overview}
Robin~E Wragg and Dilip~V Jeste.
\newblock Overview of depression and psychosis in alzheimer’s disease.
\newblock \emph{Am J Psychiatry}, 146\penalty0 (5):\penalty0 577--587, 1989.

\bibitem[Wu et~al.(2005)Wu, Chang, et~al.]{wu2005analysis}
G.~Wu, E.Y. Chang, et~al.
\newblock An analysis of transformation on non-positive semidefinite similarity
  matrix for kernel machines.
\newblock In \emph{ICML}, volume~8, 2005.

\bibitem[Xie et~al.(2010)Xie, Vemuri, et~al.]{xie2010statistical}
Y.~Xie, B.C. Vemuri, et~al.
\newblock Statistical analysis of tensor fields.
\newblock In \emph{MICCAI}, pages 682--689. Springer, 2010.

\bibitem[Xue and Zou(2012)]{xue2012regularized}
L.~Xue and H.and~others Zou.
\newblock Regularized rank-based estimation of high-dimensional nonparanormal
  graphical models.
\newblock \emph{The Annals of Statistics}, 40\penalty0 (5):\penalty0
  2541--2571, 2012.

\bibitem[Yang et~al.(2015)Yang, Lu, et~al.]{yang2015fused}
S.~Yang, Z.~Lu, et~al.
\newblock Fused multiple graphical lasso.
\newblock \emph{SIAM J. Opt.}, 25\penalty0 (2):\penalty0 916--943, 2015.

\bibitem[Yuan(2010)]{yuan2010high}
M.~Yuan.
\newblock High dimensional inverse covariance matrix estimation via {LP}.
\newblock \emph{JMRL}, 11:\penalty0 2261--2286, 2010.

\bibitem[Zhou et~al.(2010)Zhou, Lafferty, et~al.]{zhou2010time}
S.~Zhou, J.~Lafferty, et~al.
\newblock Time varying undirected graphs.
\newblock \emph{ML}, 80\penalty0 (2-3):\penalty0 295--319, 2010.

\end{thebibliography}

\end{document}